\def\eqref#1{equation~\ref{#1}}
\def\1{\bm{1}}
\def\vmu{{\bm{\mu}}}
\def\va{{\bm{a}}}
\def\vb{{\bm{b}}}
\def\vc{{\bm{c}}}
\def\vi{{\bm{i}}}
\def\vx{{\bm{x}}}
\def\vy{{\bm{y}}}
\def\mD{{\bm{D}}}
\def\mI{{\bm{I}}}
\def\mU{{\bm{U}}}
\def\mV{{\bm{V}}}
\def\mX{{\bm{X}}}
\def\mY{{\bm{Y}}}
\def\mZ{{\bm{Z}}}
\DeclareMathAlphabet{\mathsfit}{\encodingdefault}{\sfdefault}{m}{sl}
\SetMathAlphabet{\mathsfit}{bold}{\encodingdefault}{\sfdefault}{bx}{n}
\newcommand{\tens}[1]{\bm{\mathsfit{#1}}}
\def\tA{{\tens{A}}}
\def\tB{{\tens{B}}}
\def\tC{{\tens{C}}}
\def\tS{{\tens{S}}}
\def\tX{{\tens{X}}}
\def\sP{{\mathbb{P}}}
\newcommand{\E}{\mathbb{E}}
\newcommand{\R}{\mathbb{R}}
\DeclareMathOperator*{\argmax}{arg\,max}
\DeclareMathOperator{\Tr}{Tr}
\newtheorem*{theorem**}{Theorem\theoremnum}
\newenvironment{theorem*}[1][]{%
  \edef\theoremnum{\if\relax\detokenize{#1}\relax\else~#1\fi}%
  \begin{theorem**}
}{%
  \end{theorem**}
}
\newcommand{\sfd}{SFD}
\pgfplotsset{compat=1.15}
\tikzstyle{verticalLine}=[
\newcommand\frechet{Fr\'{e}chet}
\newcommand\resnetE{ResNet-18}
\newcommand\resnetF{ResNet-50}
\newcommand\resnext{ResNeXt-101 (32$\times$8d)}
\newcommand\inception{Inception-v3}
\newcommand{\imagenet}{ImageNet}
\definecolor{ForestGreen}{rgb}{.133,.545,.133}
\definecolor{BurntOrange}{rgb}{.8,.333,0}
\newcommand{\legendlocation}{(0.0,1.)}
\title{Using Skew to Assess the Quality of GAN-generated \\ Image Features}
\author{\name Lorenzo Luzi$^{*}$ \email enzo@rice.edu \\
      \addr Rice University \\
Pacific Northwest National Laboratory
      \AND
      \name Helen Jenne\thanks{These authors contributed equally to this work.} \email helen.jenne@pnnl.gov\\
      \addr Pacific Northwest National Laboratory
      \AND
      \name Carlos Ortiz Marrero \email carlos.ortizmarrero@pnnl.gov\\
      \addr Pacific Northwest National Laboratory \\
      North Carolina State University
      \AND
      \name Ryan Murray \email rwmurray@ncsu.edu \\
      \addr North Carolina State University
      }
\begin{document}

\maketitle

\begin{abstract}
The rapid advancement of Generative Adversarial Networks (GANs) 
necessitates the need to robustly evaluate these models. 
Among the established evaluation criteria, the \frechet{} Inception Distance (FID) has been widely adopted due to its conceptual simplicity, fast computation time, and strong correlation with human perception.
However, FID has inherent limitations, mainly stemming from its assumption that feature embeddings follow a Gaussian distribution, and therefore can be defined by their first two moments. 
As this does not hold in practice, in this paper we explore the importance of third-moments in image feature data and use this information to define a new measure, which we call the Skew Inception Distance (SID). 
We prove that SID is a pseudometric on probability distributions, show how it extends FID, and present a practical method for its computation. 
Our numerical experiments support that SID either tracks with FID or, in some cases, aligns more closely with human perception when evaluating image features of \imagenet{} data.  
Our work also shows that principal component analysis can be used to speed up the computation time of both FID and SID.
Although we focus on using SID on image features for GAN evaluation, SID is applicable much more generally, including for the evaluation of other generative models.
\end{abstract}

\section{Introduction}

The goal of generative modeling is to learn a data distribution that generates novel samples which are indistinguishable from training data. 
Generative Adversarial Networks (GANs) \citep{goodfellow2014}) for image generation are widely known generative models that have seen enormous progress in recent years. As these models continue to evolve, the need for robust and informative evaluation metrics is increasingly important to benchmark different models. 

Evaluating the quality of artificially generated samples is a multifaceted challenge. Ideally, a generative model should produce samples that not only
appear to be from the training dataset, but also
incorporate its full range of variations. Quantifying this is far from straightforward, leading to an active research domain focused on GAN evalution. 
However, progress in this area is noticeably slower than the rapid advancements in the development of GAN models.  
In fact, one of the pioneering metrics in GAN evaluation, the \frechet{} Inception Distance (FID) \citep{heusel2017gans}, continues to be a well-regarded tool for GAN evaluation. 

FID separately embeds the generated and training samples into a feature space via a convolutional neural network (CNN), with Inception-v3 being a common choice. These feature representations are then modeled as multivariate Gaussian distributions. The actual FID score is determined by measuring the distance between the two Gaussian distributions using the \frechet{} distance (also known as the 2-Wasserstein distance). It has been well established that FID correlates well with human judgement, is sensitive to small changes in the distribution (e.g. blurring or adding small artifacts to images), can detect intraclass mode collapse, and is fast to compute~\citep{heusel2017gans, lucic2018gans}.

That being said, FID is not without shortcomings. Common critiques of FID include its high bias (typically requiring at least 50k samples for its computation), its sensitivity to the choice of CNN for feature extraction, and its presentation as a single numerical value without distinguishing between model fidelity and diversity. Over the years, other metrics have been proposed with their own trade-offs, such as Kernel Inception Distance \citep{sajjadi2018}, Intrinsic Multi-Scale Distance (IMD) \citep{tsitsulin2019}, Class-Aware Latent Distance \citep{swisher}, and precision/recall-type metrics \citep{sajjadi2018, 2019improved, naeem2020reliable}. For a more detailed discussion of related work, see \cref{sec:related work}. 

Our investigation is primarily concerned with the assumption of Gaussian characteristics in the embedded features and FID's failure to account for higher moments of these distributions. 
It is known that Inception-v3 features, among other classifier features, are skewed~\citep{luzi2023evaluating}. In this work, we explore the importance of third moment data for evaluating GAN performance. We introduce a novel measure, Skew Inception Distance (SID), which builds upon the foundation of FID but extends its scope to incorporate third-moment data, effectively accounting for skewness. 

\subsection{Overview and summary of contributions} 

We prove theoretical properties of SID in \cref{sec:sid}, in particular describing its properties as a metric. 
We review the relationship between probability distributions and their moments in \cref{sec:theory}, in order to define a metric on the space of moments and obtain a (pseudo)metric on the space of distributions. 
In \cref{sec:dimn-red}, we provide a way to compute skewness by reducing the dimension of the Inception-v3 embeddings using principal component analysis (PCA) and provide two experiments showing that reducing the dimension of the embeddings does not lose important information. This result may be of independent interest in other problems that often rely on CNN embeddings, such as few-shot learning and out-of-distribution detection \citep{hu2021leveraging, lee2018simple}. Finally, in \cref{sec:experiments} we present empirical evaluations of SID, showing
that it either behaves similarly to FID or more closely aligns with human perception.

\section{Extending FID to include skewness}
\label{sec:sid}

To compute FID, the set of real images and the set of generated images are 
featurized using the penultimate layer of Inception-v3, and the means $\mu_1$ and $\mu_2$ and covariance matrices $\Sigma_1$ and $\Sigma_2$ are estimated assuming that these features are Gaussian. Then FID is given by\footnote{
Although \cref{eqn:fid} is the typical definition for FID (see e.g. \citet{heusel2017gans}), we note that \frechet{} distance between two multivariate Gaussian distributions $\mathcal{N}(\mu_1, \Sigma_1)$ and $\mathcal{N}(\mu_2, \Sigma_2)$ is defined as
\begin{equation}
\label{eqn:fid_orig}
|| \mu_1 - \mu_2 ||_{2}^{2} + \Tr \left( \Sigma_1 + \Sigma_2 - 2 \sqrt{ \Sigma_1^{1/2} \Sigma_2\Sigma_1^{1/2} } \right)
\end{equation}
The equivalence of Equations (\ref{eqn:fid}) and (\ref{eqn:fid_orig}) is obvious when $\Sigma_1$ and $\Sigma_2$ commute. The equations are equivalent even when $\Sigma_1$ and $\Sigma_2$ do not commute, due to the fact that the eigenvalues of $\Sigma_1^{1/2} \Sigma_2\Sigma_1^{1/2}$ and $\Sigma_1 \Sigma_2$ are identical. (see \citet[Section 2.1.2]{deig} for further details).}
\begin{equation}
\label{eqn:fid}
|| \mu_1 - \mu_2 ||_{2}^{2} + \Tr \left( \Sigma_1 + \Sigma_2 - 2 \sqrt{ \Sigma_1 \Sigma_2 } \right).
\end{equation}

In this paper, we seek to add a third term to FID to account for skewness. 
The challenges of this are first that we want to add a skewness term to \cref{eqn:fid} in such a way that the resulting measure is a metric and second that the scale of the skewness term makes sense relative to the other terms.

In order to address these challenges we compute the coskewness tensor, $s \in {\R}^{n \times n \times n}$, defined by
\[
s_{i, j, k} = \E [X_i^{*} X_j^{*} X_k^{*}]
\]
where $X^{*} = (X_1^{*}, \ldots, X_n^{*})$ is defined by $X^{*} = \Sigma^{-1/2} (X - \mu)$
and we compare the coskewness tensors of the sets of featurized real and generated images using the Frobenius norm of their element-wise cube root. 

In order to prove that the resulting equation defines a pseudometric, we start by defining a metric on the space of moments of a distribution, which as we will see, allows us to obtain a pseudometric on the space of distributions.
First, we need to formalize the relationship between probability distributions and their moments. The mathematical analysis of this relationship has been classically called the moment problem~\citep{shohat1950problem}.

\subsection{Theoretical set-up}
\label{sec:theory}

The relationship between probability distributions and their moments is often utilized because metrics in families of probability distributions are more complex compared to metrics on subsets of $\R^n$. Even popular metrics, such as the Wasserstein distance, can be challenging to compute for many families of distributions. 
Consequently, tractable metrics on probability distributions start with a metric on the original probability space and reduce it to a metric on the moments or parameters of the distributions, which typically have closed-form solutions (e.g., Hellinger~\citep{gibbs2002choosing}, Bhattacharyya~\citep{bhattacharyya1946measure}, and Wasserstein~\citep{villani2021topics} distances).

We provide several examples of distributions that can be identified with their moments, i.e., given a family of distributions $\sP_\alpha$ indexed by $\alpha$, we can construct a bijection $f$ that maps $\sP_\alpha$ to its moments $\mathbb{M}_\alpha$. Furthermore, this mapping is often a homeomorphism (meaning that both $f$ and its inverse are continuous) with the appropriate choice of metrics. 
When $f$ is a homeomorphism, we can inherit the topological properties of $\bbM_\alpha$~\citep{Munkres2000Topology}.
This means that in many ways, we can work in the space of the moments $\mathbb{M}_\alpha$, which is typically a subset of $\R^n$, instead of directly in the distribution space $\mathbb{P}_\alpha$. 
Not only that, but if the relationship is a homeomorphism, then we reduce the infinite-dimensional distribution space into a finite-dimensional vector space which has tractable methods for computing metrics.
Below are some examples. The proofs of the claims made, including the explicit constructions of the homeomorphisms, are left to \cref{sec:app homeo}.

\begin{example}[The univariate exponential distribution]
\label{ex:exponential}
The univariate exponential distribution is characterized by the following definition~\citep{casella2021statistical}:
\begin{equation}
    p_\lambda (x) = \lambda \exp(-\lambda x) \quad \text{ for } \quad \lambda \in (0,\infty).
    \label{eq:exponential}
\end{equation}
There exists a bijection from the distributions $\bbP = \{p_\lambda: \lambda \in (0,\infty)\}$ to the parameter space $(0,\infty)$, making exponential distributions completely characterized by their mean $\frac{1}{\lambda}$.
Moreover, this bijection is a homeomorphism between $((0,\infty), |\cdot|)$ and $(\bbP, \| \cdot \|_{L^2(\bbR_{>0})})$. 
\end{example}

\newcommand{\pd}[1]{\text{\textbf S}_{++}^#1}
\begin{example}[The multivariate Gaussian]
\label{ex:multivariate-gaussian}
    The multivariate Gaussian distribution is characterized by the following definition~\citep{mardia1979multivariate}:
    \begin{equation}
     \label{eq:gaussian}
        p_{\vmu,\mSig}(\vx)
        =
        |2\pi \mSig|^{-\frac{1}{2}}
        \exp\Big( -\frac{1}{2} (\vx - \vmu)^\transp \mSig^{-1}(\vx - \vmu) \Big)
    \end{equation}
    for mean vector $\vmu \in \bbR^{p}$ and covariance matrix $\mSig$ in the space of $p\times p$ real, positive-definite matrices $\pd{p}$. 
    The function $f$ that maps $p_{\vmu,\mSig}$ to $(\vmu,\mSig)$ is an isometry between the space of multivariate Gaussian distributions equipped with the $2$-Wasserstein metric and the space $(\vmu, \mSig)$ equipped with the metric defined by
   \[
        d\Big((\vmu_1, \mSig_1),(\vmu_2, \mSig_2)\Big) = \|\vmu_1 - \vmu_2 \|_2^2 
    + \Tr \left( \Sigma_1 + \Sigma_2 - 2 \sqrt{ \Sigma_1 \Sigma_2 } \right).
    \]
\end{example}
\medskip

It is important to note that when comparing the moments of two distributions, they typically come from the same family of distributions. Let us consider two homeomorphisms: a homeomorphism $f$ from the set of all normal distributions $\bbP_n$ in $\bbR$ to its moments $\bbR \times (0,\infty)$ and a homeomorphism $f'$ from the set of all uniform distributions $\bbP_u$ in $\bbR$ to its moments $\bbR \times (0,\infty)$. If we have $p \in \bbP_n$ and $q \in \bbP_u$, we must have that $p \ne q$ because they are different distributions; however, we may have that their moments match exactly. This is because $f$ and $f'$ are different homeomorphisms operating on different spaces, and thus we must take care to consider not only the matching of moments but also the underlying distributions they represent.

\subsection{Defining \sfd{}: A metric on the first three moments of probability distributions}
\label{sec:defn}

Let $\bbP$  denote a set of probability measures $\bbP$ that are homeomorphic to a space $\bbM$ equipped with a metric $d_\bbM$, which is the space of moments (or parameters) of the distribution.  Denoting the homeomorphism as $f: \bbP \to \bbM$, we can define a metric by
\begin{equation}
    d_\bbP(q, p) \triangleq d_\bbM( f(q), f(p) ).
    \label{eq:dp}
\end{equation}
Essentially, we use $f$ to push the points $p,q$ to the moment space and then apply our metric there.
Most importantly, calculating $f(q)$ and $f(p)$ is very easy and tractable; we simply calculate the moments in the usual statistical way, for example, using maximum likelihood estimation~\citep{casella2021statistical}.

\begin{proposition}
\label{prop:metric_defn}
        \cref{eq:dp} defines a metric $d_\bbP$.
\end{proposition}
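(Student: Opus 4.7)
The plan is to verify the four metric axioms for $d_\bbP$ by pulling them back from the corresponding axioms for $d_\bbM$ through the map $f$. Three of the four axioms are essentially immediate, and only the identity of indiscernibles uses the full bijectivity hypothesis on $f$; continuity of $f$ and $f^{-1}$ will not actually be needed for this particular claim (the homeomorphism hypothesis is stronger than what is required, but it is what is available from the context).

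First I would fix $p, q, r \in \bbP$ and unpack the definition $d_\bbP(q,p) = d_\bbM(f(q), f(p))$. Non-negativity is then inherited from $d_\bbM$: since $d_\bbM \geq 0$ on $\bbM \times \bbM$, we get $d_\bbP \geq 0$ on $\bbP \times \bbP$. Symmetry follows in the same way:
\[
d_\bbP(q,p) = d_\bbM(f(q), f(p)) = d_\bbM(f(p), f(q)) = d_\bbP(p,q).
\]
For the triangle inequality, I would apply the triangle inequality in $\bbM$ to the points $f(p), f(q), f(r)$:
\[
d_\bbP(p,r) = d_\bbM(f(p),f(r)) \leq d_\bbM(f(p),f(q)) + d_\bbM(f(q),f(r)) = d_\bbP(p,q) + d_\bbP(q,r).
\]

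The only step that genuinely uses something about $f$ beyond it being well-defined is the identity of indiscernibles. For the ``if'' direction, $p = q$ gives $f(p) = f(q)$ and hence $d_\bbP(p,q) = d_\bbM(f(p),f(q)) = 0$. For the ``only if'' direction, suppose $d_\bbP(p,q) = 0$; then $d_\bbM(f(p),f(q)) = 0$, and since $d_\bbM$ is a metric this forces $f(p) = f(q)$. Here I would invoke the fact that $f$ is a homeomorphism, hence in particular a bijection, so injectivity gives $p = q$.

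The main (mild) obstacle is really just being explicit about where injectivity enters: without it, the construction would only yield a pseudometric, which is why the authors emphasize the bijection/homeomorphism structure in the examples of \cref{sec:theory}. I would close by remarking that the argument is entirely formal — it is simply the standard observation that any metric can be pulled back along an injection — so no properties specific to moments, Wasserstein distance, or the examples preceding the proposition are invoked.
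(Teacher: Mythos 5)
Your proof is correct and follows essentially the same route as the paper's: the first three axioms are pulled back directly from $d_\bbM$, and definiteness comes from the injectivity of $f$. Your closing remark that only injectivity (not the full homeomorphism) is needed is also made by the authors immediately after their proof, so nothing here diverges from the paper.
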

\begin{proof}
It is immediate that $d_\bbP$ is non-negative, symmetric, and satisfies the triangle inequality. The fact that $d_\bbP$ is definite follows from the injectivity of $f$.
\end{proof}

As is clear from the proof, rather than requiring a homeomorphism
one could consider an injective $f$ instead at the cost of losing the topological equivalence of $\bbP$ and $\bbM$. However, this is not necessary in many cases as a desired homeomorphism exists.

In the case of numerous distributions, a complete description requires multiple moments,
so we view our space $\bbM$ as a product of several spaces: $\bbM = \bbM_1 \times \hdots \times \bbM_m$. Moreover, each space $\bbM_i$ may have its own metric $d_{\bbM_i}$. Therefore, we will need to consider which metric product space we will use. For any $p \in [1,\infty)$, we can define a product metric~\citep{mendelson1990introduction} as
\[
    d_\bbM\Big( (\vx_1, \dots, \vx_m), (\vy_1, \dots, \vy_n) \Big)
    =
    \sqrt[p]{ \sum_{i=1}^m d_{\bbM_i}^p(\vx_i, \vy_i) }.
\]
Since all norms on finite-dimensional vector spaces are equivalent~\citep{royden1988real}, we will just pick $p = 2$. Therefore, given each moment space $\bbM_i$, we can construct a metric on that space $d_{\bbM_i}$ and together construct a metric $d_\bbM$ on the whole moment space $\bbM$ which we use to define a metric (\cref{eq:dp}) on the distributions themselves.

We are now ready to define our metric, which we call \sfd{} (Skew \frechet{} Distance),
using the first three moments of a probability distribution:
\begin{align}
    \label{eq:sfd}
    &d_\bbM^2((\vmu_1, \mSig_1, \tS_1),(\vmu_2, \mSig_2, \tS_2))
    =
    \|\vmu_1 - \vmu_2 \|_2^2 
    + \Tr \left( \Sigma_1 + \Sigma_2 - 2 \sqrt{ \Sigma_1 \Sigma_2 } \right) + \left\|\sqrt[\odot3]{\tS_1} - \sqrt[\odot3]{\tS_2} \right\|_F^2.
\end{align}
Here, $\sqrt[\odot3]{\tS}$ is the element-wise cube root\footnote{To our knowledge, there is no computationally efficient $3$-tensor cube root.} of the skew $3$-tensor $\tS$ , which ensures that each term of \sfd{} has the same units. (The coskewness $3$-tensor $\tS$ has cubed units, while the first and second terms in \sfd{} have linear units before they are squared).

Note that \sfd{} is a generalization of the $2$-Wasserstein distance for Gaussians: since $\tS = \bzero$ in the Gaussian case, \sfd{} between two Gaussian is the same as the $2$-Wasserstein distance.

The first and second terms of SFD define metrics on $\bbR^n$ and $\pd{p}$~\citep{givens1984class}. 
So it remains to show that the third term defines a metric. We actually prove a more general result. 

\begin{theorem}
    \label{thm:montonicTransform}
    Let $\bbX(n_1, \dots, n_r; \bbC)$ be the set of all complex-valued, $r$-tensors with shape $(n_1, \dots, n_r)$. Let $\alpha:\bbC \rightarrow \bbC$ be any invertible function. 
    Then for $\tA, \tB \in \bbX(n_1, \dots, n_r; \bbC)$ we have that
    \begin{align*}
        d(\tA, \tB) 
        \vcentcolon=& \sqrt{\sum_{\vi \in \{1,\dots, n_1\}\times \hdots \times \{1,\dots, n_r\}} \big|\alpha(\tA_\vi) - \alpha(\tB_\vi)\big|^2}
    \end{align*}
    defines a metric $d$.
\end{theorem}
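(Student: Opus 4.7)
The plan is to verify the four metric axioms directly, leveraging the observation that $d$ is nothing other than the pullback of the standard Euclidean (Frobenius) metric on $\bbC^{n_1 \cdots n_r}$ through the map $\Phi : \bbX(n_1,\dots,n_r;\bbC) \to \bbC^{n_1 \cdots n_r}$ that flattens a tensor and applies $\alpha$ element-wise. Writing
\[
    d(\tA, \tB) = \|\Phi(\tA) - \Phi(\tB)\|_2,
\]
three of the four axioms become immediate. Non-negativity is clear since $d$ is defined as the square root of a sum of non-negative real numbers. Symmetry follows from $|\alpha(\tA_\vi) - \alpha(\tB_\vi)| = |\alpha(\tB_\vi) - \alpha(\tA_\vi)|$ termwise.

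For definiteness, one direction is trivial: if $\tA = \tB$ then $\alpha(\tA_\vi) = \alpha(\tB_\vi)$ for every multi-index $\vi$, and $d(\tA,\tB) = 0$. For the reverse direction, suppose $d(\tA, \tB) = 0$. Then each non-negative summand must vanish, giving $\alpha(\tA_\vi) = \alpha(\tB_\vi)$ for all $\vi$. This is precisely the step that uses invertibility of $\alpha$: applying $\alpha^{-1}$ to both sides yields $\tA_\vi = \tB_\vi$ for all $\vi$, hence $\tA = \tB$.

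The triangle inequality is the only substantive step, but it too reduces to a known fact. For any $\tA, \tB, \tC \in \bbX$, writing $\vu = \Phi(\tA) - \Phi(\tB)$ and $\vv = \Phi(\tB) - \Phi(\tC)$, we have $\Phi(\tA) - \Phi(\tC) = \vu + \vv$. The standard Euclidean norm $\|\cdot\|_2$ on $\bbC^{n_1\cdots n_r}$ satisfies Minkowski's inequality, so $\|\vu + \vv\|_2 \le \|\vu\|_2 + \|\vv\|_2$, which translates to
\[
    d(\tA, \tC) \;\le\; d(\tA, \tB) + d(\tB, \tC).
\]
There is no genuine obstacle here; the proof is almost entirely bookkeeping, and the only place the hypothesis on $\alpha$ is used is to secure the non-trivial direction of definiteness. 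I would also remark that invertibility (rather than, say, continuity or monotonicity) is exactly the minimal hypothesis needed: if $\alpha$ failed to be injective, there would exist distinct tensors identified under $\Phi$, and $d$ would only be a pseudometric.
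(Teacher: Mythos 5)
Your proof is correct and follows essentially the same route as the paper's: both verify non-negativity, symmetry, and definiteness directly (using injectivity of $\alpha$ only for the reverse direction of definiteness), and both establish the triangle inequality by flattening the tensors and invoking the triangle inequality for the Euclidean $2$-norm on $\bbC^{n_1\cdots n_r}$. Your framing of $d$ as the pullback of the Frobenius metric through $\Phi$ is just a more explicit packaging of the same argument, and your closing remark that injectivity is the minimal hypothesis (failing which $d$ is only a pseudometric) is a correct and worthwhile observation.
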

We postpone the proof to \cref{sec:invertible-transformations}.

The metric $d$ looks very similar to a Frobenius norm, but it does not induce a norm. If we choose $\alpha$ to be the identity map, then we do indeed recover the Frobenius norm. However, there are many choices of $\alpha$ which make it so that $d$ does not induce a norm; for example, if $\alpha(x) = e^x$ we see that the homogeneity property of norms is violated.

For \sfd{}, we choose $\alpha: \bbR \to \bbR$ defined by $\alpha(x) = \sqrt[3]{x}$ and see that \sfd{} is a proper metric with the same units for each term. Additionally, we transform the skew term $s = \left\|\sqrt[\odot3]{\tS_1} - \sqrt[\odot3]{\tS_2} \right\|_F^2$ by $s' = \sigma\left(\frac{\alpha s}{m}\right)m - \frac{m}{2}$ using a sigmoid $\sigma$. \sfd{} is still a metric since these are invertible transformations which preserve non-negativity. We do this to ensure that the skew term has comparable scaling to the covariance and mean terms. We saw that values of $\alpha = 10,000$ and $m=150$ seemed to work well in the examples we tested.  %

If a distribution is homeomorphic to its first three moments, then \sfd{} defines a metric on that family of distributions. However, if this is false and a distribution is homeomorphic to more than three moments, then \sfd{} instead defines a pseudometric. More formally,

\begin{proposition}
    Suppose $\bbP$ defines a family of distributions that are defined by their first three moments, i.e., for all $p_1, p_2 \in \bbP$, we have that $(\vmu_1, \mSig_1, \tS_1) = (\vmu_2, \mSig_2, \tS_2)$ if and only if $p_1 = p_2$. Then, \sfd{} defines a proper metric. If there are additional moments which are needed for the equality to hold, then \sfd{} defines a pseudometric.

\end{proposition}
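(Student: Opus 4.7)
The plan is to assemble the three pieces already established in the paper and pull the resulting product metric back to $\bbP$ via Proposition~\ref{prop:metric_defn}. Introduce the moment map
\[
f\colon \bbP \longrightarrow \bbM \;:=\; \bbR^n \times \pd{n} \times \bbX(n,n,n;\bbR),
\qquad p \mapsto (\vmu(p),\mSig(p),\tS(p)).
\]
The first hypothesis of the proposition -- that the first three moments characterize $p \in \bbP$ -- is exactly the statement that $f$ is injective, and the alternative hypothesis is exactly that $f$ is not injective.

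First I would verify that each of the three summands on the right-hand side of (\ref{eq:sfd}) is the square of a metric on the corresponding factor of $\bbM$. The mean term is the squared Euclidean distance on $\bbR^n$. The covariance term is the squared Bures/$2$-Wasserstein distance on $\pd{n}$, a classical metric already invoked in Example~\ref{ex:multivariate-gaussian} and cited through \citet{givens1984class}. The skew term is the square of the tensor metric produced by Theorem~\ref{thm:montonicTransform} with the invertible choice $\alpha(x)=\sqrt[3]{x}$ on $\bbR$.

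Next I would assemble these three pieces into the $\ell^2$ product metric on $\bbM$ using the product-metric construction recalled immediately before Theorem~\ref{thm:montonicTransform} with $p=2$, so that the squared product metric $d_\bbM^2$ equals exactly the right-hand side of (\ref{eq:sfd}). Then I would appeal directly to Proposition~\ref{prop:metric_defn}: the pullback $d_\bbP(p,q) := d_\bbM(f(p),f(q))$ inherits non-negativity, symmetry, and the triangle inequality from $d_\bbM$ for any $f$, and satisfies definiteness if and only if $f$ is injective. (Although Proposition~\ref{prop:metric_defn} is phrased for a homeomorphism, its proof uses only injectivity, as the authors themselves remark immediately after it.) In the first case of the proposition we conclude that \sfd{} is a bona fide metric on $\bbP$; in the second case, the first three metric axioms still hold but definiteness fails, so \sfd{} is a pseudometric.

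The only nontrivial bookkeeping is the observation that every summand in (\ref{eq:sfd}) is already written as a squared distance, so the displayed expression equals $d_\bbM^2$ rather than $d_\bbM$ under the $p=2$ product convention -- this does not affect any of the metric axioms. I expect no real obstacle: Theorem~\ref{thm:montonicTransform} and Proposition~\ref{prop:metric_defn} together have already done all the genuine work, and the proof of the final proposition is essentially the observation that \sfd{} is by construction an instance of the metric described in Proposition~\ref{prop:metric_defn}, with the injectivity/non-injectivity of $f$ dictating whether it is a metric or merely a pseudometric.
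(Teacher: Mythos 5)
Your proposal is correct and follows essentially the same route as the paper: each summand of (\ref{eq:sfd}) is (the square of) a metric on its factor --- Euclidean on $\bbR^n$, the Bures/$2$-Wasserstein term on $\pd{p}$ via \citet{givens1984class}, and the tensor term via Theorem~\ref{thm:montonicTransform} with $\alpha(x)=\sqrt[3]{x}$ --- these combine into the $p=2$ product metric on the moment space, and the pullback along the moment map is a metric or merely a pseudometric according to whether that map is injective. The only cosmetic difference is that the paper witnesses the failure of definiteness with an explicit pair of distributions agreeing on the first three moments but differing in a higher one, whereas you phrase the same fact abstractly as non-injectivity of $f$.
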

\begin{proof}
    In the case where $\bbP$ defines a family of distributions which are defined by their first three moments,
    \sfd{} defines a metric because it is a product metric~\citep{royden1988real}, and by \cref{thm:montonicTransform}, each term in the product is a metric. 

    Now suppose $p_1$ and $p_2$ are defined by their moments $(\vmu, \mSig, \tS, \tX_1)$ and $(\vmu, \mSig, \tS, \tX_2)$; all their moments are the same except for $\tX_1$ and $\tX_2$ (which can be any order), 
    and $\tX_1 \ne \tX_2$.
    Since \sfd{} is zero in this case, \sfd{} is not strictly definite and, therefore, \sfd{} cannot be a metric. However, \sfd{} defines a product over pseudometrics and therefore is a pseudometric itself~\citep{freiwald2014introduction}.

\end{proof}

The following example gives an instance where \sfd{} is a pseudometric. 
\begin{example}
    Let $p_1$ and $p_2$ be univariate Gaussian mixtures with two components. The parameters of $p_1$ are $\mu_1 = \mu_2 = 0$, $\sigma_1 = \sigma_2 = 1$, and $\pi_1 = \pi_2 = 0.5$; that is the first GMM is a standard normal Gaussian. The parameters of $p_2$ are $\mu_1 = \mu_2 = 0$, $\sigma_1 = \frac{1}{\sqrt{2}}$, $\sigma_2 = \sqrt{2}$, $\pi_1 = \frac{2}{3}$, and $\pi_2=\frac{1}{3}$. Both of these distributions have mean $0$, variance $1$, and skew of $0$ meaning that \sfd{} between these two distributions is zero even though they have non-matching higher-order moments.
\end{example}

While this is a limitation of \sfd{}, it can still be used to compare distributions even when it is a pseudometric. For example, the \frechet{} distance is commonly used on distributions that are not Gaussian.

Although our pseudometric \sfd{} is quite general, for the remainder of the paper, we will apply \sfd{} to Inception-v3 embeddings and refer to this specific case as SID.

\section{Using dimensionality reduction to compute SID}
\label{sec:dimn-red}

To compute the coskewness tensor in practice, we need to reduce the dimension of the Inception-v3 embeddings,
as these embeddings are $2048$-dimensional. This requires the coskewness tensor to be $2048\times 2048 \times 2048$, which is 64 GB using 32-bit precision. That is just to store one of these tensors, let alone store and manipulate two of them! Storing tensors of this size on the GPU is out of the question.

In addition, computing the coskewness tensor is the slowest of the three terms; the operation scales as $O(d^3)$, where $d$ is the dimensionality of the reduced feature space. In comparison, FID can be computed in $O(d^2m +m^3)$ time, where $m << d$ \citep{backpropagating}. (Although this requires observing that it suffices to compute the eigenvalues of $\Sigma_1\Sigma_2$; many FID implementations compute $\trace(\sqrt{ \Sigma_1 \Sigma_2} )$ using the \texttt{scipy.linalg.sqrtm} function, which requires computing the Schur decomposition of $\Sigma_1 \Sigma_2$ and takes $O(d^3)$ time). Because of the memory and computational obstacles of SID, we reduce the dimension of the Inception-v3 embeddings.

\begin{table*}[!ht]
    \caption{PCA reduces the dimension of the features while still maintaining important information. This table demonstrates the classification performance of \inception{} on ImageNet using features that have been reduced in dimension with PCA and then linearly projected back into their $2048$-dimensional space. We see that the dimensionality reduction does not significantly affect classification performance, even for large reductions, implying that important information in the features is preserved. }
    \label{tab:classification}
    \centering
    \bgroup
    \setlength\tabcolsep{4pt}
    \begin{tabular}{rl@{\hskip 20pt}cccc}
        \toprule
        \multicolumn{2}{l}{Dimensionality} 
        &
        Top-$1$ Train Acc
        &
        Top-$5$ Train Acc
        &
        Top-$1$ Val Acc
        &
        Top-$5$ Val Acc
        \\ \midrule
        2048 & (baseline)
        &
        91.78\%
        &
        99.16\%
        &
        77.21\%
        &
        93.53\%
        \\
        1024 &(50\%)
        &
        90.77\%
        &
        98.95\%
        &
        76.24\%
        &
        92.83\%
        \\
        512 &(25\%)
        &
        88.48\%
        &
        98.21\%
        &
        74.15\%
        &
        90.82\%
        \\
        256 &(12.5\%)
        &
        81.59\%
        &
        95.12\%
        &
        68.40\%
        &
        85.55\%
        \\ \bottomrule
    \end{tabular}
    \egroup
\end{table*}

In order to reduce the dimensionality of the Inception-v3 embeddings, we apply principal component analysis (PCA). 
PCA is a widely used dimensionality reduction method that projects the data onto a lower-dimensional subspace in such a way that maximizes the variance preserved. Because PCA maximizes the preserved variance, it is a natural choice for dimension reduction in our case (the intuition here is that, because FID is a function of the variances, if we assume that our variances will decay rapidly, truncating tail variances will not change FID by very much).
After applying PCA, we need to normalize the covariance and skew terms. Because we center our data matrix, we do not modify the mean with the dimensionality reduction, just the covariance and skew. 

Letting $\mX\in\bbR^{n\times p}$ denote our centered data matrix (where $n > p$), we will use the following transformation of $\mX$ to normalize the FID values for different values of $k$:
\begin{equation}
\label{eqn:normalization}
    \mY = \sqrt{\frac{\sum_{i=1}^p [\mD]_{ii}}{\sum_{i=1}^k [\mD_1]_{kk}}} \mX \mV_1.
\end{equation}
Here, $\mV_1$ is the matrix consisting of the first $k$ right singular vectors of $\mX$, $\mD$ is the matrix of singular values, and $\mD_1$ is the $k \times k$ truncated matrix of singular values. We discuss this normalization further in \cref{sec:normalization}.

We emphasize that for a fixed $k$, we are transforming both sets of features in the same way so that they are comparable. 

To summarize, to compute SID we first compute the Inception-v3 embeddings $\mX$, then compute the singular value decomposition of $\mX$ in order to apply the transformation in \cref{eqn:normalization}. After the dimensionality of the embeddings has been reduced, we compute the FID terms and the coskewness tensor.

\subsection{The effect of PCA on FID}
\label{sec:pca-affects-fid}

\begin{figure*}[hbt]
\begin{tabular}{cc}
    \begin{tikzpicture}
    \pgfplotsset{
            cycle list/PRGn,
            cycle list/Dark2,
            cycle list/RdYlGn-6,
            cycle multiindex* list={
                mark list*\nextlist
                linestyles*\nextlist
                RdYlGn-6\nextlist
            },
        }
    
    \begin{axis}[
            title={Added Gaussian Noise}, 
            ylabel={FID}, 
            xlabel={Gaussian standard deviation $\sigma$}, 
            axis x line*=bottom,
            axis y line*=left,
            ymin=0,
            grid,
            xticklabel style={
                /pgf/number format/fixed,
                /pgf/number format/precision=2
            },
            scaled x ticks=false,
            legend style={at={\legendlocation},anchor=north west},
            every axis plot/.append style={
                ultra thick, mark size=1pt}
            ]
    
            \pgfplotstableread[col sep=comma]{csv/agn_fid.csv}{\table}
            \pgfplotstablegetcolsof{\table}
            \pgfmathtruncatemacro\numberofcols{\pgfplotsretval-1}
    
            \foreach \n in {1,...,\numberofcols} {
                \pgfplotstablegetcolumnnamebyindex{\n}\of{\table}\to{\colname}
    
                \addplot
                table [
                    x index=0, 
                    y index=\n, 
                    col sep=comma] {\table};
    
                \addlegendentryexpanded{$d= $\colname};
            }
    
    \end{axis}
    \end{tikzpicture}
    &
    \begin{tikzpicture}

    \pgfplotsset{
            cycle list/PRGn,
            cycle list/Dark2,
            cycle list/RdYlGn-6,
            cycle multiindex* list={
                mark list*\nextlist
                linestyles*\nextlist
                RdYlGn-6\nextlist
            },
        }
    
    \begin{axis}[
            title={Salt and Pepper Noise}, 
            ylabel={FID}, 
            xlabel={Salt and Pepper noise $p$ (\%)}, 
            axis x line*=bottom,
            axis y line*=left,
            ymin=0,
            grid,
            xticklabel style={
                /pgf/number format/fixed,
                /pgf/number format/precision=2
            },
            scaled x ticks=false,
            legend style={at={\legendlocation},anchor=north west},
            every axis plot/.append style={
                ultra thick, mark size=1pt}
            ]
    
            \pgfplotstableread[col sep=comma]{csv/sp_fid.csv}{\table}
            \pgfplotstablegetcolsof{\table}
            \pgfmathtruncatemacro\numberofcols{\pgfplotsretval-1}
    
            \foreach \n in {1,...,\numberofcols} {
                \pgfplotstablegetcolumnnamebyindex{\n}\of{\table}\to{\colname}
    
                \addplot
                table [
                    x index=0, 
                    y index=\n, 
                    col sep=comma] {\table};
    
                \addlegendentryexpanded{$d= $\colname};
            }
    
    \end{axis}
    \end{tikzpicture}
    \\
    \begin{tikzpicture}

    \pgfplotsset{
            cycle list/PRGn,
            cycle list/Dark2,
            cycle list/RdYlGn-6,
            cycle multiindex* list={
                mark list*\nextlist
                linestyles*\nextlist
                RdYlGn-6\nextlist
            },
        }
    
    \begin{axis}[
            title={Gaussian Blur}, 
            ylabel={FID}, 
            xlabel={Gausian kernel $\sigma$}, 
            axis x line*=bottom,
            axis y line*=left,
            ymin=0,
            grid,
            xticklabel style={
                /pgf/number format/fixed,
                /pgf/number format/precision=2
            },
            scaled x ticks=false,
            legend style={at={\legendlocation},anchor=north west},
            every axis plot/.append style={
                ultra thick, mark size=1pt}
            ]
    
            \pgfplotstableread[col sep=comma]{csv/blur_fid.csv}{\table}
            \pgfplotstablegetcolsof{\table}
            \pgfmathtruncatemacro\numberofcols{\pgfplotsretval-1}
    
            \foreach \n in {1,...,\numberofcols} {
                \pgfplotstablegetcolumnnamebyindex{\n}\of{\table}\to{\colname}
    
                \addplot
                table [
                    x index=0, 
                    y index=\n, 
                    col sep=comma] {\table};
    
                \addlegendentryexpanded{$d= $\colname};
            }
    
    \end{axis}
    \end{tikzpicture}
    &
    \begin{tikzpicture}

    \pgfplotsset{
            cycle list/PRGn,
            cycle list/Dark2,
            cycle list/RdYlGn-6,
            cycle multiindex* list={
                mark list*\nextlist
                linestyles*\nextlist
                RdYlGn-6\nextlist
            },
        }
    
    \begin{axis}[
            title={Rectangular Occlusions}, 
            ylabel={FID}, 
            xlabel={Erased percentage (\%)}, 
            axis x line*=bottom,
            axis y line*=left,
            ymin=0,
            grid,
            xticklabel style={
                /pgf/number format/fixed,
                /pgf/number format/precision=2
            },
            scaled x ticks=false,
            legend style={at={\legendlocation},anchor=north west},
            every axis plot/.append style={
                ultra thick, mark size=1pt}
            ]
    
            \pgfplotstableread[col sep=comma]{csv/box_fid.csv}{\table}
            \pgfplotstablegetcolsof{\table}
            \pgfmathtruncatemacro\numberofcols{\pgfplotsretval-1}
    
            \foreach \n in {1,...,\numberofcols} {
                \pgfplotstablegetcolumnnamebyindex{\n}\of{\table}\to{\colname}
    
                \addplot
                table [
                    x index=0, 
                    y index=\n, 
                    col sep=comma] {\table};
    
                \addlegendentryexpanded{$d= $\colname};
            }
    
    \end{axis}
    \end{tikzpicture}
\end{tabular}

\caption{FID behaves similarly even when we project features down to lower dimensions via PCA.
We show four types of distortions that are applied to \imagenet{} images: added Gaussian noise, salt and pepper noise, Gaussian blur, and rectangular occlusions.
For the Gaussian blur, we use a kernel size of 5. In each plot, FID is shown as a function of the parameter controlling the distortion, after reducing the embeddings to dimension $d \in \{512, 256, 128, 64, 32, 16\}$. All experiments are done using \inception{} as a feature extractor on the entire (50K) \imagenet{} validation set.}
\label{fig:added-noise-fid}
\end{figure*}
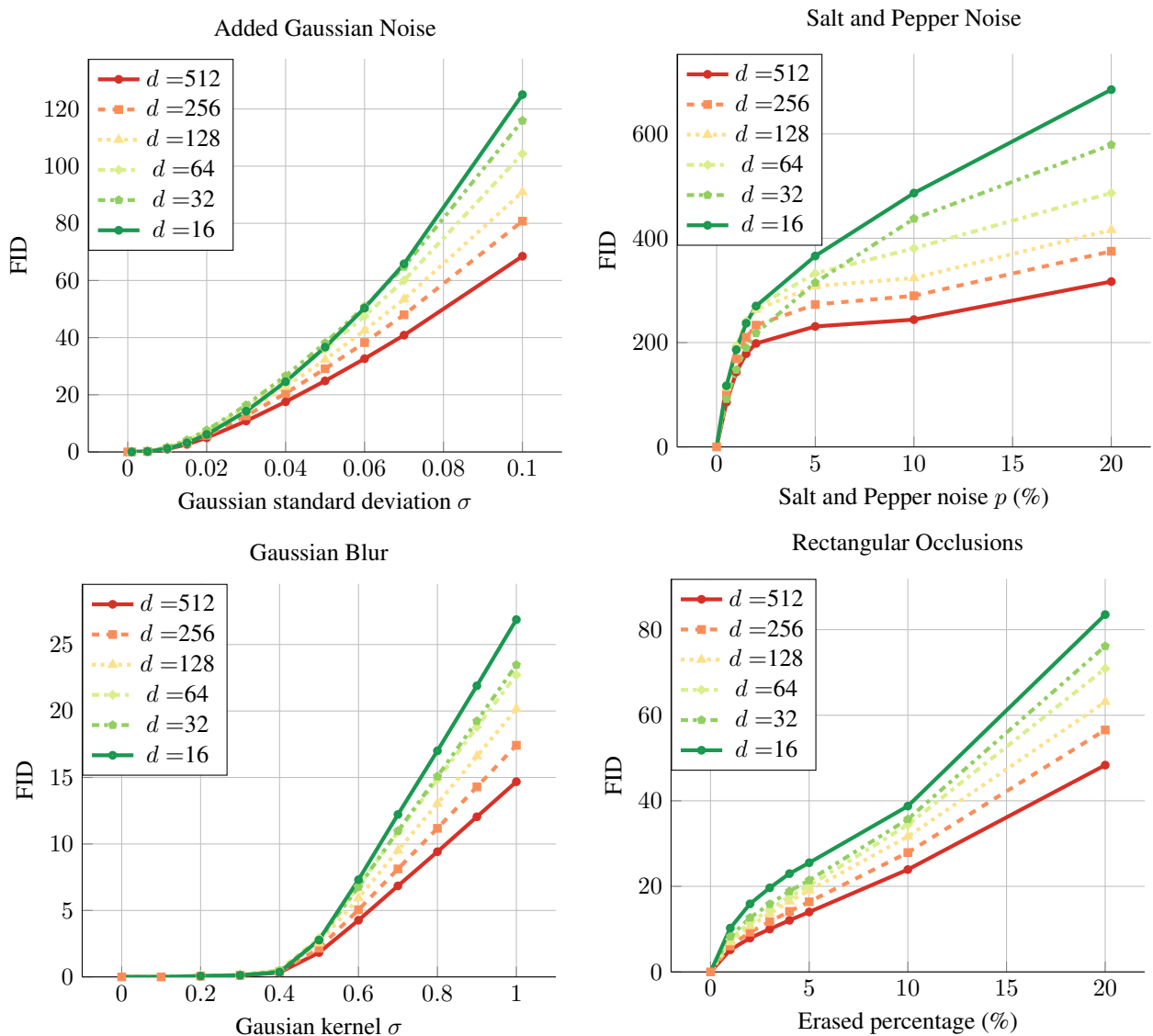

Before using the reduced-dimension features to compute SID, we needed to ensure that PCA reduces the dimension of the features while still maintaining important information. We investigate this question in two ways.
First, since Inception-v3 is a classifier,
a natural question is
how the classifier performance deteriorates as we reduce the feature dimension. 
To implement this experiment, we first calculate the PCA transform for $k \in \{2048, 1024, 512, 256\}$ on the training data. Then, we extracted Inception-v3 features from the training and validation datasets. 
Next, we reduced the dimension with the precalculated transform and then projected back to the $2048$-dimensional space; since the PCA transform is a semi-orthogonal matrix $\mV_k$, we simply multiplied by the transpose $\mV_k^\transp$. Finally, we sent these features to the final layer of Inception-v3 to calculate training and validation accuracies. 
The results are summarized in \cref{tab:classification}. We see that even with significant dimensionality reduction (12.5\%), there is \textit{at most} a $10\%$ difference in accuracy, which implies that using PCA to reduce the dimension to $256$ does not significantly affect the classification power of the embeddings.

Furthermore, we investigated how dimension reduction affected the behavior of FID on common experiments used to compare GAN metrics. These experiments involve corrupting an image and studying how the GAN metric behaves with respect to a parameter controlling the level of corruption.
Specifically, we used the following corruptions: Gaussian noise, salt and pepper noise, Gaussian blur, and adding rectangles at randomly chosen locations \citep{heusel2017gans}. We describe these experiments in more detail in Section~\ref{sec:compare-sid-and-fid}. In all experiments, the dimension reduction affected the value of FID slightly but not its behavior; examples are shown in \cref{fig:added-noise-fid}.

\subsection{Time improvement for SID}

\cref{tab:time} demonstrates that dimensionality reduction leads to significant time and memory savings when computing skewness. These savings are so substantial that the skew calculation can be computed completely on a GPU.

Having established that using PCA to reduce the dimensionality of Inception-v3 features is reasonable and has computational and memory benefits, we turn our attention to the behavior of SID as a performance measure.

\begin{table}[hbt]
    \caption{Calculating skewness on the GPU is computationally efficient, especially when reducing the feature dimensionality. There is an order of magnitude difference in CPU and GPU times. $(\ast)$ would not fit completely on the GPU but can be calculated on a GPU using mini-batches; however, batch and mini-batch algorithms are not comparable due to intrinsic differences.
    }
    \label{tab:time}
    \centering
    \bgroup
    \setlength\tabcolsep{4pt}
    \begin{tabular}{ll@{\hskip 20pt}ccr}
        \toprule
        \multicolumn{2}{l}{Dimensionality} 
        &
        CPU time & GPU time & Memory
        \\ \midrule
        2048 & (baseline)
        &
        886.9s
        &
        $\ast$%
        & 64 GB
        \\
        1024 &(50\%)
        &
        161.1s
        &
        9.1s
        &
        8 GB
        \\
        512 &(25\%)
        &
        35.5s
        &
        0.8s
        &
        1 GB
        \\
        256 &(12.5\%)
        &
        4.66s
        &
        0.02s
        &
        128 MB
        \\ \bottomrule
    \end{tabular}
    \egroup

\end{table}

\section{SID Experiments}
\label{sec:experiments}

\subsection{Features are skewed even after PCA}
It is known that Inception-v3 features, among other classifier features, are skewed~\citep{luzi2023evaluating}. However, it is not immediately clear whether features are skewed after dimensionality reduction using PCA.
In order to test this, we applied standard skewness tests to the dimension-reduced data. 
In particular,
we performed the Mardia skewness test~\citep{mardia1970measures} on \inception{} multivariate $2048$-dimensional features with a significance level of 0.1\% and the data failed the test at dimensionality reductions of $1024, 512, 256, 128, 64, 32,$ and $16$. We also performed Kolmogorov–Smirnov hypothesis tests\footnote{The Kolmogorov-Smirnov hypothesis test is a nonparametric method for evaluating the goodness-of-fit. It assesses whether an underlying probability distribution (the marginal distributions in our case) differs from a hypothesized distribution, a Gaussian distribution.}~\citep{dodge2008kolmogorov} at each marginal with significance values of $0.1\%$ and found that $100\%$ of the marginals failed the tests. This demonstrates that even with extreme dimensionality reduction with PCA, skewness persists in the data. We repeated the experiment with \resnetE{} features and obtained the same result: all tests failed with significance values of $0.1\%$.

\subsection{Comparing SID and FID}
\label{sec:compare-sid-and-fid}

In order to compare SID and FID we studied how these measures are affected by the corruptions introduced in Section~\ref{sec:pca-affects-fid}:
\begin{enumerate}[noitemsep]
\item Gaussian noise: We added Gaussian noise with $\sigma \in \{0, 0.001, 0.005, 0.01, \ldots, 0.1\}$.
\item Salt and pepper noise: We changed a proportion $p$ of the pixels in the image to black or white (with equal probability) for $p \in \{0, 0.5\%, 1.0\%, 1.5\%, \ldots, 20\%\}$.
\item Gaussian blur: We convolved the image with a Gaussian kernel with standard deviation $\sigma \in \{0.1, 0.2, 0.3, \ldots, 1.0\}$.
\item Adding black rectangles as occlusions: We added five rectangles at randomly chosen locations, where the scale increases with scale parameter
$s \in \{1\%, 2\%, 3\%, \ldots, 20\%\}$.
\end{enumerate}

In these experiments, we found that the skew term (and consequently SID) often behaved similarly to FID.

In other work, larger values were chosen for the parameters controlling these distortions
(see \citet{heusel2017gans}). But looking at examples of images with these levels of corruption shows that for most values of these parameters, the difference between the corrupted image and the original image is easily perceptible by a human. This is why in order to investigate what is happening at the transition point between noise that is human perceptible, we focused on smaller values of these parameters. Here we see a difference between the skew term and the FID. For example, in Figure~\ref{fig:agn}, which shows the FID and skew term as Gaussian noise is added with $\sigma \in \{0.01, 0.015, 0.02, 0.03, 0.04, 0.05, 0.06\}$, we see that the skew does not pick up any difference until the noise becomes perceivable. Once the noise becomes perceivable, the skew increases faster than the covariance and mean terms. The Gaussian blur experiment with $\sigma \in \{0.0, 0.1, 0.2, \ldots, 1.0\}$ showed similar results and is shown in \cref{app:skew}. Other experiments showed that SID and FID behaved similarly, as shown in \cref{fig:added-noise-sid} in \cref{app:skew}.

While in this section, we focused on results for SID using 
\inception{} features on \imagenet{} data, 
we also studied SID using 
different models for feature extraction (namely, \resnetE{}, \resnetF{}, and \resnext{}) and different datasets (CelebA~\citep{celeba}, FFHQ~\citep{FFHQ}, and SVHN~\citep{SVHN}). Additionally, to explore SID's applicability to different types of generative approaches, we
studied SID using data from diffusion models, using pretrained diffusion models that were trained on Stanford Cars, CelebA, AFHQ, and the Flowers dataset to generate images. These experiments, presented in \cref{app:models,app:datasets}, show that SID is robust to the model used for feature extraction and that in all settings SID increases as the noise distortions increase.

\begin{figure*}
    \newcommand{\two}[2]{
    \begin{tabular}{@{}c@{}} %
    #1 \\ #2
    \end{tabular}
    }

    \begin{tikzpicture}
    \pgfplotsset{
            cycle list/PRGn,
            cycle list/Dark2,
            cycle list/RdYlGn-6,
            cycle multiindex* list={
                mark list*\nextlist
                linestyles*\nextlist
                RdYlGn-6\nextlist
            },
        }

    \pgfplotsset{
    xticklabel style={
    /pgf/number format/fixed,
    /pgf/number format/precision=3
    },
    scaled x ticks=false
    }
    \begin{axis}[
            name={plot}, 
            xtick={0.0, 0.01, 0.015, 0.02, 0.03, 0.04, 0.05, 0.06},%
            title={FID and the skew term with the added Gaussian noise experiment}, 
            ylabel={FID}, 
            xlabel={Added Gaussian noise $\sigma$}, 
            axis x line*=bottom,
            axis y line*=left,
            ymin=0,
            xmin=-0.005,
            xmax=0.065,
            grid,
            width=15cm,
            height=6.5cm,
            legend style={at={(0.,.93)},anchor=north west},
            every axis plot/.append style={
                ultra thick, mark size=1pt}
            ]

            \addplot
            table [
                x index=0, 
                y index=2, 
                col sep=comma] {csv/agn.csv};
            \label{plot_fid}

    \end{axis}

    \def\ylim2{1.2}
    \begin{axis}[   
            axis y line*=right,
            ylabel={Skew term}, 
            axis x line=none,
            ymin=0,
            ymax=\ylim2,
            xmin=-0.005,
            xmax=0.065,
            width=15cm,
            height=6.5cm,
            legend style={at={(0.,1.)},anchor=north west},
            every axis plot/.append style={
                ultra thick, mark size=1pt}
            ]

            \pgfplotsset{cycle list shift=1}

            \addlegendimage{/pgfplots/refstyle=plot_fid}\addlegendentry{FID}

            \addplot
            table [
                x index=0, 
                y index=4, 
                col sep=comma] {csv/agn.csv};

            \addlegendentryexpanded{Skew};
    
    \end{axis}

    \def\figwidth{3cm}
    \def\figspacing{-0.1cm}
    \node (c) [below= of plot.south] 
    {\two{\\$\sigma = 0.015$ }
    {\includegraphics[width=\figwidth]{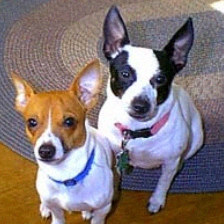}}};
    
    \node (b) [left= \figspacing of c] {\two{\\$\sigma = 0.01$}{\includegraphics[width=\figwidth]{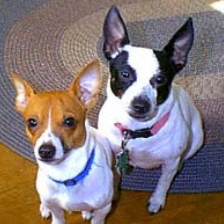}}};
    
    \node (a) [left= \figspacing of b]{\two{Ground Truth\\$\sigma = 0.0$ }{\includegraphics[width=\figwidth]{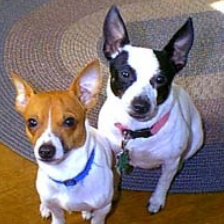}}};
    
    \node (d) [right= \figspacing of c] {\two{\\$\sigma = 0.02$}{\includegraphics[width=\figwidth]{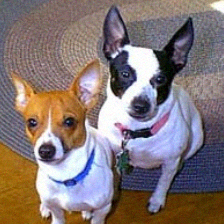}}};
    
    \node (e) [right= \figspacing of d] {\two{\\$\sigma =0.06$}{\includegraphics[width=\figwidth]{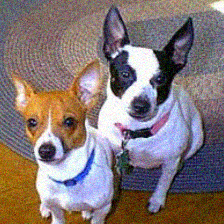}}};

    \node (a2) [below= \figspacing of a] {\includegraphics[width=\figwidth]{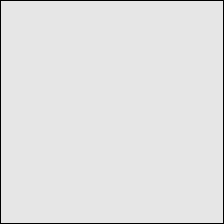}};

    \node (b2) [below= \figspacing of b] {\includegraphics[width=\figwidth]{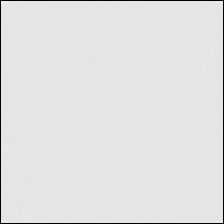}};

    \node (c2) [below= \figspacing of c] {\includegraphics[width=\figwidth]{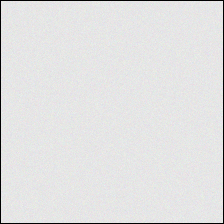}};

    \node (d2) [below= \figspacing of d] {\includegraphics[width=\figwidth]{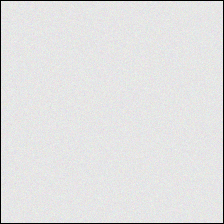}};

    \node (e2) [below= \figspacing of e] {\includegraphics[width=\figwidth]{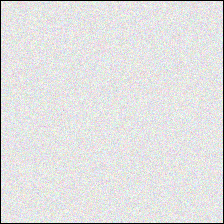}};

    \node (labelimg) [left=0.1cm of a,rotate=90,xshift=.7cm] {Data $+$ Noise};
    \node (labelnoise) [left=0.1cm of a2,rotate=90,xshift=.5cm] {Noise};
    \end{tikzpicture}
    \caption{Skew tracks more with human perception than FID. The noise levels on the images are quite low and undetectable until $\sigma > 0.015$; yet the FID increases linearly throughout. On the other hand, skew stays low for these undetectable noise levels. These were typical images taken from the 50,000 samples used to calculate FID.
    }
    \label{fig:agn}
\end{figure*}

\section{Conclusion}
\label{sec:Conclusion}

This paper presents a novel metric on distributions of image features that can successfully incorporate third-moment data and can be used to evaluate the performance of GANs. Our findings indicate that, particularly when images are subtly distorted, SID can sometimes better match human perception. A notable limitation of SID is its computational complexity, which is $O(d^3)$, where $d$ is the dimensionality of the features, making dimensionality reduction essential for its calculation. Consequently, the dimensionality reduction we perform in estimating SID may not include information that is not captured in the largest principal components. Another limitation of our work can be observed in many related works: experiments often struggle to faithfully replicate the nuanced distinctions between real and GAN-generated images; for instance, GAN images may exhibit multiple types of distortion in the same image. Furthermore, assertions regarding alignment with human perception should be substantiated through human studies. The field would greatly benefit from an extensive and standardized repository of experiments designed to evaluate GAN metrics. Until then, it is crucial that GAN evaluation metrics are based on a solid theoretical foundation.

The current work does not compare SID to other established metrics, which is an important direction for future study.
We conclude by discussing three other directions for future work: 

\noindent{\bf Exploring other formulations of the skew term}
\\ \noindent{}The subtle differences between SID and FID have raised the question of how much redundant information  the skew term encapsulates. We're considering whether our current formulation of the skew term contributes to this and exploring alternative formulations for skew. %
For example, we could define a skew term using Mardia skewness~\citep{mardia1970measures} or Kollo skewness~\citep{kollo2008multivariate}. 
Assuming we could formulate a (pseudo)metric using one of these measures of skewness, it would be interesting to see how it changes the performance of SID. 

\noindent{\bf Exploring other methods for dimensionality reduction}
\\ \noindent{}We could also use random, isotropic, Gaussian projections to reduce the dimensionality of our features. To do this, we would simply create a matrix $\mU_k \in \bbR^{k\times d}$, which takes the $d$-dimensional data and reduces it to $k$-dimensional data, by sampling from an isotropic Gaussian.  This matrix is computationally inexpensive to construct and has several nice statistical properties. For example, $\mU_k$ is likely to be a projection matrix and unitary if $k$ is small enough \citep{wegner2021}.

\noindent{\bf Identify other applications}
\\ \noindent{}GAN evaluation is not the only area where people use features extracted from a backbone architecture and assume that these features are normally distributed. There is also few-shot learning (see \citet{hu2021leveraging} and \citet{chobola2021})
and out-of-distribution detection (see \citet{lee2018simple} and \citet{ahuja2019}). Both the potential speed-up of applying PCA to the features and computing skewness could be of interest in these domains.

\subsection*{Broader Impacts}
Generative models are powerful tools which offer both positive applications and opportunities for misuse.
While this paper focused on the mathematical aspect of generative model evaluation, we recognize the importance of considering
potential societal impacts of our work.

Evaluation measures like SID can be used to improve generative models, some of which might be used for nefarious purposes, such as generating fake social media profiles to deceive users. 
Additionally, all measures have their shortcomings, and there may be hidden biases in SID which have yet to be identified. We have not yet explored, for instance, how SID handles biases in training data. These are important considerations not only for this work but for the broader field of generative model evaluation.

\section*{Acknowledgments}
RM was partially supported by the NSF grant DMS-2307971, and by a grant from the Simons Foundation MP-TSM-00002904. 

\bibliography{main}
\bibliographystyle{tmlr}

\appendix

\section{Related work}
\label{sec:related work}

The landscape of GAN evaluation measures has seen substantial growth since FID was introduced in 2017. This discussion focuses on evaluation metrics most closely related to ours; for a comprehensive overview, see \citet{borji2022pros}. 

There are relatively few research works that address the assumption that the Inception-v3 features are Gaussian. \citet{swisher} reduces the dimensionality of the features using an autoencoder to be able to directly compute the optimal transport distance between the distributions in the latent space.
This issue is also addressed in \citep{tsitsulin2019}, which rather than using CNN embeddings builds a $k$-nearest neighbors graphs directly from the data, and then estimates the differences between those graphs. There is also a class-aware version of FID~\citep{liu2018improved} that tries to model different modes as Gaussians.
Additionally, in~\citep{luzi2023evaluating}, the authors develop a new metric, WaM, to compare image features, using Gaussian mixture models (GMMs) instead. Since WaM uses GMMs, it is a generalization of FID.

Our work also shows that PCA can be used to speed up the computation time of FID.
The computation time of FID is slow due to both the term $\Tr(\sqrt{ \Sigma_1 \Sigma_2 } )$ and computing the Inception-v3 embeddings (see \citet[Section 2]{backpropagating}). 
There are several recent papers that address this issue. \citet{backpropagating} propose an algorithm to compute the term $\Tr(\Sigma_1 \Sigma_2)$ by constructing a matrix with the same eigenvalues. Motivated by this work and applications of random matrix theory to deep learning, \citet{deig} proposed comparing the sorted eigenvalues of the two covariance matrices. They show that this is much faster and appears to converge faster than FID, requiring a smaller amount of data to estimate accurately.

\section{Proofs of homeomorphisms}

\label{sec:app homeo}

\begin{proof}[Proof of Example~\ref{ex:exponential}]
    Let $\bbP = \{ p_\lambda: \lambda \in (0,\infty) \}$ be the distribution space. Then, $f: (0,\infty) \to \bbP$ defined pointwise by $f(\lambda) = p_\lambda$ gives the desired forward mapping. Essentially, given $\lambda$, one can easily construct the exponential distribution $p_\lambda$ by the pointwise definition in \cref{eq:exponential}.

    Now we construct the inverse of $f$.
    Define $g: (0,\infty) \to \bbP$ by
    $g(p_\lambda) = p_\lambda(0) = \lambda$. We see that 
    \[
        (f\circ g)(p_\lambda) = f(g(p_\lambda)) = f(\lambda) = p_\lambda
    \]
    and
    \[
        (g\circ f)(\lambda) = g(f(\lambda)) = g(p_\lambda) = \lambda
    \]
    implying that $f$ and $g$ are inverses of each other. 

    Thus, exponential distributions are completely characterized by their parameter $\lambda$ and consequently are completely characterized by their mean $\frac{1}{\lambda}$.

    Now we show that the bijection is a homeomorphism. Let $\epsilon >0$ be given along with $\lambda \in (0,\infty)$. 
    Letting $\delta = -2\epsilon^2 + 2\epsilon \sqrt{\epsilon^2 + \lambda}$ 
, we see that for any $\lambda' \in (0,\infty)$ such that $|\lambda - \lambda'| < \delta$ we have 
    \begin{align*}
         \|p_\lambda - p_{\lambda'}\|_{L^2(\bbR_{>0})}
        &=
        \sqrt{ \int_0^\infty \Big( \lambda \exp(-\lambda x) - \lambda' \exp(-\lambda'x) \Big)^2 dx }
        \\
        &= \sqrt{\int_0^\infty \Big(\lambda^2 \exp(-2\lambda x) - 2 \lambda \lambda' \exp(-x(\lambda + \lambda')) + {\lambda'}^2 \exp(-2\lambda'x) \Big) dx}
        \\
        &= \sqrt{\frac{\lambda}{2}- \frac{2\lambda\lambda'}{\lambda + \lambda'} + \frac{\lambda'}{2} }
        \\
        &=
        \sqrt{
        \frac{ \lambda^2 - 2\lambda \lambda' + {\lambda'}^2 }{2(\lambda + \lambda')}
        }
        \\
        &=
        \frac{ |\lambda-\lambda'| }{\sqrt{2\lambda + 2\lambda'}}
        \\
        &< \frac{\delta}{2\sqrt{\lambda-\delta}}
        \\
        &= \frac{-2\epsilon^2 + 2\epsilon\sqrt{\epsilon^2+\lambda}}{2\sqrt{\lambda +2\epsilon^2 - 2\epsilon\sqrt{\epsilon^2+\lambda} }}
        \\
        &= \sqrt{\frac{(-\epsilon^2 + \epsilon\sqrt{\epsilon^2+\lambda})^2}{\lambda +2\epsilon^2 - 2\epsilon\sqrt{\epsilon^2+\lambda} }}
        \\
        &= \epsilon \sqrt{\frac{\epsilon^2 + \epsilon^2+\lambda - 2\epsilon\sqrt{\epsilon^2+\lambda} }{\lambda +2\epsilon^2 - 2\epsilon\sqrt{\epsilon^2+\lambda} }}
        \\
        &= \epsilon.
    \end{align*}
    Thus the mapping is continuous in one direction. 
    Now, focusing on the inverse, first note that we must pick an $\delta > 0$ so that all $p_{\lambda'}\in\bbP$ satisfy 
    \begin{align*}
         \|p_\lambda - p_{\lambda'}\|_{L^2(\bbR_{>0})} 
        =
        \sqrt{ \int_0^\infty \Big( \lambda \exp(-\lambda x) - \lambda' \exp(-\lambda'x) \Big)^2 dx }
        = \frac{ |\lambda-\lambda'| }{\sqrt{2\lambda + 2\lambda'}}
        < \delta
    \end{align*}
    We see that after we pick such a $\delta > 0$, we will have
    \begin{align*}
        |\lambda-\lambda'|
        <
        \delta \sqrt{2(\lambda + \lambda')} \leq
        2\delta \sqrt{\lambda + \delta}
    \end{align*}
    If we pick an appropriate $\delta >0$, so that $2\delta \sqrt{\lambda + \delta} = \epsilon$, then our proof is done. To do this, we square both sides and try to solve the cubic equation:
    \[
        \delta^3 + \lambda \delta^2 - \frac{\epsilon^2}{4} = 0.
    \]
    However, we need not find the explicit formula for the correct $\delta > 0$. Although such a formula can be found, it is very messy and not very enlightening. We must just verify that such a $\delta > 0$ exists. Note that if we choose $\delta = 0$, then the left-hand side of the equation becomes negative. Hence, there must be a positive, real $\delta > 0$ that solves that equation (since it is a cubic polynomial). Call this $\delta^\ast$. Then, we have that
    \[
        |\lambda-\lambda'|
        <
        2\delta^\ast \sqrt{\lambda + \delta^\ast}
        = \epsilon,
    \]
    as desired.
\end{proof}

\begin{proof}[Proof of Example~\ref{ex:multivariate-gaussian}]
    \newcommand{\p}{p_{\vmu, \mSig}}
    Let $\bbP = \{\p: \mu \in \bbR^p, \mSig \in \pd{p}\}$ be the distribution space. First, we define a function $m: \bbP \to \bbR^p$ which takes $\p$ and returns the mean $\vmu$. We can get the mean from $\p$ by selecting $\vx \in \bbR^p$ which satisfies the following:
    \[
        \sup_{\vx \in \bbR^p} \p(\vx).
    \]
    We can see from \cref{eq:gaussian} that $\vmu$ always satisfies this supremum. To see uniqueness, note that 
    \[
        (\vx_1 - \vmu)^\transp \mSig^{-1}(\vx_1 - \vmu) = (\vx_2 - \vmu)^\transp \mSig^{-1}(\vx_2 - \vmu) = 0
    \]
    implies that $\vx_1 = \vx_2 = \vmu$ since $\mSig$ is positive definite. So we define a function $m: \bbP \to \bbR^p$ by $m(\p) = \argmax\limits_{\vx\in\bbR^p} \p(\vx)$.

    Next, we construct a function $s: \bbP\to\pd{p}$ that takes $\p$ and returns the covariance $\mSig$. Note that due to our construction of $m$, we can easily determine the value of $|2\pi\mSig|^{-\frac{1}{2}} = \p(m(\p))$. Therefore, we focus on the simplified function 
    \begin{equation}
        \p'(\vx) 
        = \vx^\transp\mSig^{-1}\vx 
        =
        \sum_{i=1}^p \sum_{j=1}^p x_i x_j \Sigma_{ij}^{-1}
        \label{eq:pprime}
    \end{equation}
    which can be tied back to $\p$ via the relation $\p'(\vx) = -2\log\Big(|2\pi \mSig|^\frac{1}{2} \p(\vx + \vmu)\Big)$. In the above expression, $\Sigma_{ij}^{-1}$ is the $ij$-th entry in $\mSig^{-1}$. 
    Therefore, we first construct the simplified version of $s$ and call it $s': \p' \mapsto \mSig$ as follows.

    We can simply obtain the diagonal of $\mSig^{-1}$ from $\p'$. Let $\vx_i$ be the zero vector except that the $i$-th element is $1$. Then,
    we see from \cref{eq:pprime} that $\p'(\vx_i) = \Sigma_{ii}^{-1}$. Similarly, let $\vx_{ij}$ be the zero vector, except that the elements $i$-th and $j$-th are $1$. Then, we see from \cref{eq:pprime} that $\p'(\vx_{ij}) = \Sigma_{ii}^{-1} + 2 \Sigma_{ij}^{-1} + \Sigma_{jj}^{-1}$. We rearrange this expression to get $\Sigma_{ij}^{-1} = \frac{1}{2}\Big(\p'(\vx_{ij}) - \Sigma_{ii}^{-1} - \Sigma_{jj}^{-1}\Big)$. By doing this for each $i,j\in\{1,\dots, p\}$, we get $\mSig^{-1}$ and thus $\mSig$. 
    
    We construct $s$ as follows. First, we start $\p$ and calculate $\vmu$ from $m$. Second, we use $\p$ and $\vmu$ to construct $\p'$. Lastly, we use $p'$ as described above to calculate $\mSig$ as desired.

    We have constructed functions which, when used together, map from $\bbP$ to $\bbR^p \times \pd{p}$ as desired.  
\end{proof}

\section{Invertible transformations applied to the Frobenius metric}
\label{sec:invertible-transformations}

We first recall the Theorem from \ref{sec:defn} which shows that the third term in \cref{eq:sfd}, $\|\sqrt[\odot3]{\tS_1} - \sqrt[\odot3]{\tS_2} \|_F^2$, is a squared metric.

\begin{theorem*}[\ref{thm:montonicTransform} ]
    
    For $\tA, \tB \in \bbX(n_1, \dots, n_r; \bbC)$ we have that
    \begin{align*}
        d(\tA, \tB) 
        \vcentcolon=& \sqrt{\sum_{\vi \in \{1,\dots, n_1\}\times \hdots \times \{1,\dots, n_r\}} \big|\alpha(\tA_\vi) - \alpha(\tB_\vi)\big|^2}
    \end{align*}
    defines a proper metric $d$.
\end{theorem*}

\begin{proof}[Proof of \cref{thm:montonicTransform}]
    We first show that $d$ is non-negative, symmetric, and that $d(\tA, \tB) = 0$ if and only if $\tA = \tB$.
    Clearly, $d(\tA, \tB) \geq 0$ for all $\tA, \tB$ because the sum is over non-negative values. 
    The symmetry is also clear from the definition because $\big|\alpha(\tA_\vi) - \alpha(\tB_\vi)\big|^2 = \big|\alpha(\tB_\vi) - \alpha(\tA_\vi)\big|^2$ for each index $\vi$.
    If $\tA = \tB$ then $d(\tA,\tB) = 0$. Conversely, if $d(\tA,\tB) = 0$, this implies that each term in the sum $\big|\alpha(\tA_\vi) - \alpha(\tB_\vi)\big|^2 = 0$, which implies that $\tA = \tB$ because $\alpha$ is a bijection, as desired.

    All that is left to show is that $d$ satisfies the triangle inequality. So we introduce a new array $\tC \in \bbX(n_1, \dots, n_r; \bbC)$ for the proof. We write the vector $\va \in \bbC^{n_1 \times \hdots \times n_r}$ (and its elements $a_j$) as vectorized (or flattened) versions of $\tA$. Similar definitions apply to $\tB$ and $\tC$.
    
    \begin{align*}
        d(\tA, \tB) + d(\tB, \tC)
        &=
        \sqrt{\sum_{\vi} \big|\alpha(\tA_\vi) - \alpha(\tB_\vi)\big|^2}
     + \sqrt{\sum_{\vi} \big|\alpha(\tB_\vi) - \alpha(\tC_\vi)\big|^2}
        \\
        &=
        \sqrt{\sum_{j} \big|\alpha(a_j) - \alpha(b_j)\big|^2}
     + \sqrt{\sum_{j} \big|\alpha(b_j) - \alpha(c_j)\big|^2}
        \\
        &= \|\alpha(\va) - \alpha(\vb)\|_2 + \|\alpha(\vb) - \alpha(\vc)\|_2
        \\
        &\geq \|\alpha(\va) - \alpha(\vc)\|_2
        \\
        &= \sqrt{\sum_{j} \big|\alpha(a_j) - \alpha(c_j)\big|^2}
        \\
        &=
        \sqrt{\sum_{\vi} \big|\alpha(\tA_\vi) - \alpha(\tC_\vi)\big|^2}
        \\
        &= d(\tA, \tC),
    \end{align*}
    where we denote $\alpha(\va), \alpha(\vb),$ and $\alpha(\vc)$ as $\alpha$ applied element-wise to the vectors $\va, \vb,$ and $\vc$. Moreover, the inequality comes from the $2$-norm on vectors here. Thus, we have shown that by flattening these multi-dimensional arrays, we can use properties of vector metrics to prove the triangle inequality.
\end{proof}

\section{Discussion of normalization}

\label{sec:normalization}
PCA can be computed by first computing the singular value decomposition (SVD) of the centered data matrix.
Suppose that $n > p$ and that the data matrix $\mX\in\bbR^{n\times p}$  is centered. Then its covariance matrix is $\widehat \mSig = \frac{1}{n-1}\mX^\transp \mX$. If the singular value decomposition yields $\mX = \mU \begin{bmatrix} \mD \\ \bzero\end{bmatrix} \mV^\transp$ with $\mU \in \bbR^{n\times n}, \mD \in \bbR^{p\times p}$, and $\mV \in \bbR^{p\times p}$ then
\begin{equation}
    \widehat \mSig 
    =
    \frac{1}{n-1} \mV \mD^2 \mV^\transp.
    \label{eq:sigma from PCA}
\end{equation}

To perform PCA, we split up $\mV = \begin{bmatrix}\mV_1 & \mV_2\end{bmatrix}$ and right multiply $\mX$ by $\mV_1 \in\bbR^{p \times k}$. Here, the number of columns $k$ of $\mV_1$ is the dimension we are reducing to; equivalently, it is the number of principal components.

So, given two matrices of features $\mX$ and $\mY$, we compute the singular value decomposition of the reference data set $\mZ$. $\mZ$ may be a concatenation of $\mX$ and $\mY$, or it may be larger. Writing $\mZ = \mU \begin{bmatrix} \mD \\ \bzero\end{bmatrix} \mV^\transp$, we reduce $\mX$ and $\mY$ by right multiplication by $\mV_1$, the key point here being that we are using the same matrix $\mV_1$ to reduce both $\mX$ and $\mY$.

If we consider the covariance of $\mX \mV_1$ we get
\begin{align*}
    \widehat \mSig_1 =
    \frac{1}{n-1}\mV_1^\transp \mX^\transp \mX \mV_1 = 
    \frac{1}{n-1}\mV_1^\transp \Big( \mV_1 \mD_1 \mV_1^\transp + \mV_2 \mD_2 \mV_2^\transp  \Big) \mV_1 =
    \frac{1}{n-1}\mD_1
\end{align*}
since $\mV_1^\transp \mV_1 = \mI_k$ and $\mV_2^\transp \mV_1 = \bzero$. Note that
\[
    \Tr\widehat \mSig = \frac{1}{n-1}\sum_{i=1}^p [\mD]_{ii}, \text{ but } 
    \Tr{\widehat \mSig_1} = \frac{1}{n-1} \sum_{i=1}^k [\mD_1]_{kk},
\]
implying that the scales of the two traces has now changed.

We conclude that to normalize between calculations of FID with different values of $k$, one should use the following transformation of $\mX$:
\[
    \mY = \sqrt{\frac{\sum_{i=1}^p [\mD]_{ii}}{\sum_{i=1}^k [\mD_1]_{kk}}} \mX \mV_1.
\]
This results in the following covariance:
\[
    \frac{1}{n-1}\mY^\transp \mY
    =
    \frac{1}{n-1} \frac{\sum_{i=1}^p [\mD]_{ii}}{\sum_{i=1}^k [\mD_1]_{kk}} \mD_1
\]
so that the trace of that quantity and of $\widehat \mSig$ are both $\frac{1}{n-1}\sum_{i=1}^p [\mD]_{ii}$.

\section{Additional skew experiments}
\label{app:skew}
Here we show some additional experiments regarding skew; i.e., the blur experiment (\cref{fig:blur}) and PCA dimensionality reduction on SID (\cref{fig:added-noise-sid}).

\begin{figure*}[htb]
    \newcommand{\two}[2]{
    \begin{tabular}{@{}c@{}} %
    #1 \\ #2
    \end{tabular}
    }

    \begin{tikzpicture}
    \pgfplotsset{
            cycle list/PRGn,
            cycle list/Dark2,
            cycle list/RdYlGn-6,
            cycle multiindex* list={
                mark list*\nextlist
                linestyles*\nextlist
                RdYlGn-6\nextlist
            },
        }

    \pgfplotsset{
    xticklabel style={
    /pgf/number format/fixed,
    /pgf/number format/precision=3
    },
    scaled x ticks=false
    }
    \def\ylim2{4}
    \begin{axis}[
            name={plot}, 
            xtick={0.0, 0.2, 0.4, 0.6, 0.8, 1., 1.2, 1.4, 1.6, 1.8, 2.0},%
            title={FID and the skew term with the Gaussian blur experiment}, 
            ylabel={FID}, 
            xlabel={Gaussian kernel $\sigma$}, 
            axis x line*=bottom,
            axis y line*=left,
            ymin=0,
            xmin=-0.1,
            xmax=1.1,
            ymax=\ylim2,
            grid,
            width=15cm,
            height=6.5cm,
            legend style={at={(0.,.93)},anchor=north west},
            every axis plot/.append style={
                ultra thick, mark size=1pt}
            ]

            \addplot
            table [
                x index=0, 
                y index=2, 
                col sep=comma] {csv/blur.csv};
            \label{plot_fid2}
    
    \end{axis}

    \begin{axis}[   
            axis y line*=right,
            ylabel={Skew term}, 
            axis x line=none,
            ymin=0,
            ymax=\ylim2,
            xmin=-0.1,
            xmax=1.1,
            width=15cm,
            height=6.5cm,
            legend style={at={(0.,1.)},anchor=north west},
            every axis plot/.append style={
                ultra thick, mark size=1pt}
            ]

            \pgfplotsset{cycle list shift=1}

            \addlegendimage{/pgfplots/refstyle=plot_fid2}\addlegendentry{FID}

            \addplot
            table [
                x index=0, 
                y index=4, 
                col sep=comma] {csv/blur.csv};

            \addlegendentryexpanded{Skew};
            \addlegendentryexpanded{Threshold};
    
    \end{axis}

    \def\figwidth{3cm}
    \def\figspacing{-0.1cm}
    \node (c) [below= of plot.south] 
    {\two{\\$\sigma = 0.4$ }
    {\includegraphics[width=\figwidth]{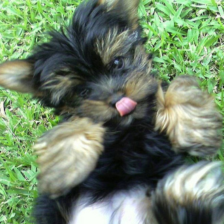}}};
    
    \node (b) [left= \figspacing of c] {\two{\\$\sigma = 0.2$}{\includegraphics[width=\figwidth]{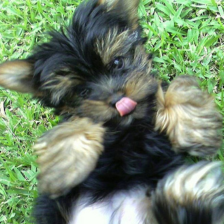}}};
    
    \node (a) [left= \figspacing of b]{\two{Ground Truth\\$\sigma = 0.0$ }{\includegraphics[width=\figwidth]{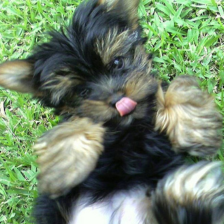}}};
    
    \node (d) [right= \figspacing of c] {\two{\\$\sigma = 0.8$}{\includegraphics[width=\figwidth]{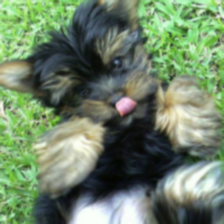}}};
    
    \node (e) [right= \figspacing of d] {\two{\\$\sigma =1.0$}{\includegraphics[width=\figwidth]{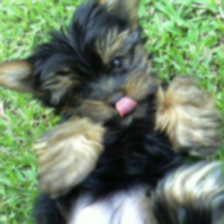}}};

    \end{tikzpicture}
    \caption{
    Skew tracks more with human perception than FID. Note that the blur levels on the images are quite low and undetectable until $\sigma > 0.4$ and skew stays low for these undetectable noise levels, in contrast to FID. These were typical images taken from the 50,000 samples used to calculate the terms.
    }
    \label{fig:blur}
\end{figure*}

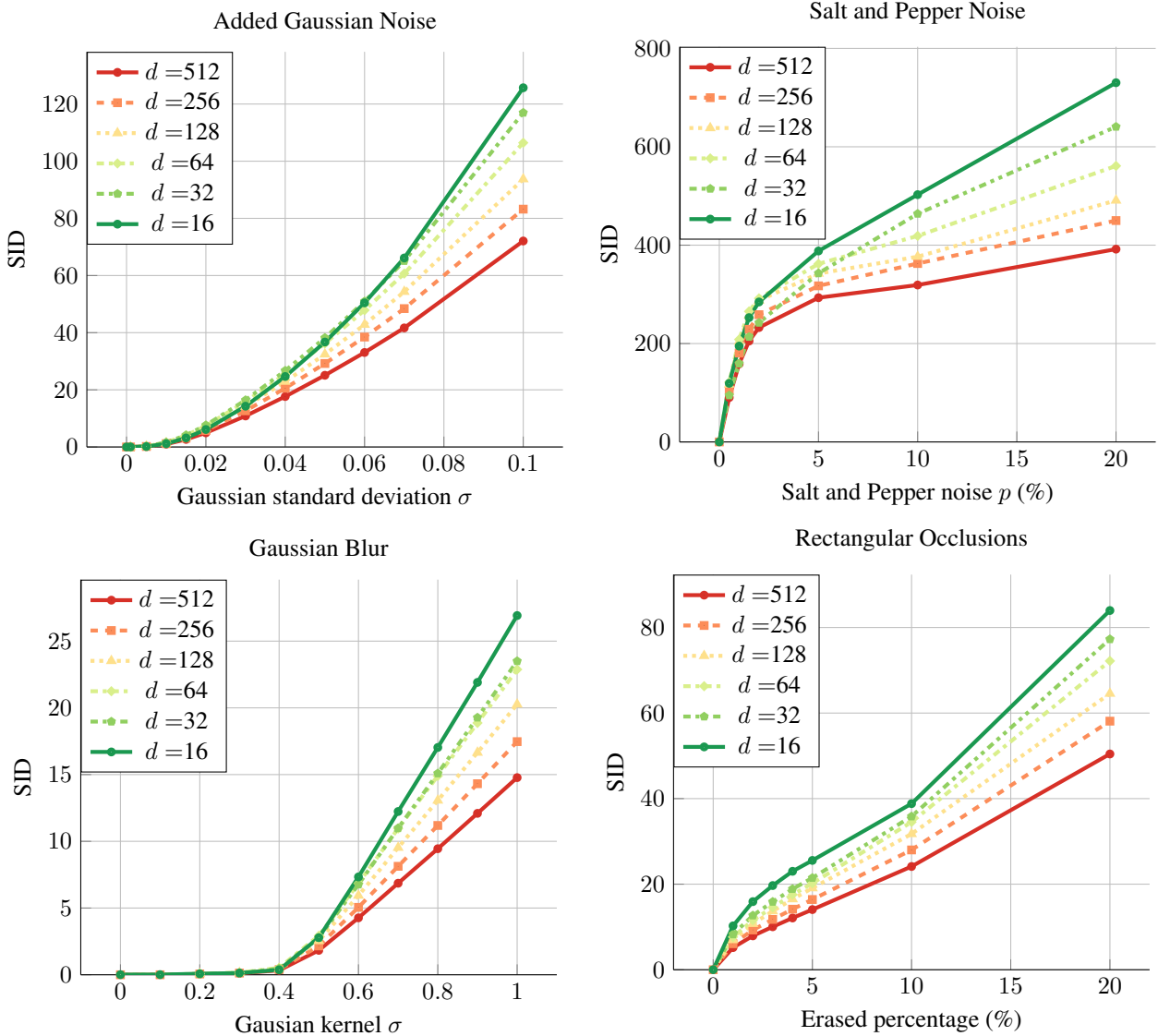
\begin{figure*}[htb]
\begin{tabular}{cc}
    \begin{tikzpicture}
    \pgfplotsset{
            cycle list/PRGn,
            cycle list/Dark2,
            cycle list/RdYlGn-6,
            cycle multiindex* list={
                mark list*\nextlist
                linestyles*\nextlist
                RdYlGn-6\nextlist
            },
        }
    
    \begin{axis}[
            title={Added Gaussian Noise}, 
            ylabel={SID}, 
            xlabel={Gaussian standard deviation $\sigma$}, 
            axis x line*=bottom,
            axis y line*=left,
            ymin=0,
            grid,
            xticklabel style={
                /pgf/number format/fixed,
                /pgf/number format/precision=2
            },
            scaled x ticks=false,
            legend style={at={\legendlocation},anchor=north west},
            every axis plot/.append style={
                ultra thick, mark size=1pt}
            ]
    
            \pgfplotstableread[col sep=comma]{csv/agn_sid.csv}{\table}
            \pgfplotstablegetcolsof{\table}
            \pgfmathtruncatemacro\numberofcols{\pgfplotsretval-1}
    
            \foreach \n in {1,...,\numberofcols} {
                \pgfplotstablegetcolumnnamebyindex{\n}\of{\table}\to{\colname}
    
                \addplot
                table [
                    x index=0, 
                    y index=\n, 
                    col sep=comma] {\table};
    
                \addlegendentryexpanded{$d= $\colname};
            }
    
    \end{axis}
    \end{tikzpicture}
    &
    \begin{tikzpicture}

    \pgfplotsset{
            cycle list/PRGn,
            cycle list/Dark2,
            cycle list/RdYlGn-6,
            cycle multiindex* list={
                mark list*\nextlist
                linestyles*\nextlist
                RdYlGn-6\nextlist
            },
        }
    
    \begin{axis}[
            title={Salt and Pepper Noise}, 
            ylabel={SID}, 
            xlabel={Salt and Pepper noise $p$ (\%)}, 
            axis x line*=bottom,
            axis y line*=left,
            ymin=0,
            grid,
            xticklabel style={
                /pgf/number format/fixed,
                /pgf/number format/precision=2
            },
            scaled x ticks=false,
            legend style={at={\legendlocation},anchor=north west},
            every axis plot/.append style={
                ultra thick, mark size=1pt}
            ]
    
            \pgfplotstableread[col sep=comma]{csv/sp_sid.csv}{\table}
            \pgfplotstablegetcolsof{\table}
            \pgfmathtruncatemacro\numberofcols{\pgfplotsretval-1}
    
            \foreach \n in {1,...,\numberofcols} {
                \pgfplotstablegetcolumnnamebyindex{\n}\of{\table}\to{\colname}
    
                \addplot
                table [
                    x index=0, 
                    y index=\n, 
                    col sep=comma] {\table};
    
                \addlegendentryexpanded{$d= $\colname};
            }
    
    \end{axis}
    \end{tikzpicture}
    \\
    \begin{tikzpicture}

    \pgfplotsset{
            cycle list/PRGn,
            cycle list/Dark2,
            cycle list/RdYlGn-6,
            cycle multiindex* list={
                mark list*\nextlist
                linestyles*\nextlist
                RdYlGn-6\nextlist
            },
        }
    
    \begin{axis}[
            title={Gaussian Blur}, 
            ylabel={SID}, 
            xlabel={Gausian kernel $\sigma$}, 
            axis x line*=bottom,
            axis y line*=left,
            ymin=0,
            grid,
            xticklabel style={
                /pgf/number format/fixed,
                /pgf/number format/precision=2
            },
            scaled x ticks=false,
            legend style={at={\legendlocation},anchor=north west},
            every axis plot/.append style={
                ultra thick, mark size=1pt}
            ]
    
            \pgfplotstableread[col sep=comma]{csv/blur_sid.csv}{\table}
            \pgfplotstablegetcolsof{\table}
            \pgfmathtruncatemacro\numberofcols{\pgfplotsretval-1}
    
            \foreach \n in {1,...,\numberofcols} {
                \pgfplotstablegetcolumnnamebyindex{\n}\of{\table}\to{\colname}
    
                \addplot
                table [
                    x index=0, 
                    y index=\n, 
                    col sep=comma] {\table};
    
                \addlegendentryexpanded{$d= $\colname};
            }
    
    \end{axis}
    \end{tikzpicture}
    &
    \begin{tikzpicture}

    \pgfplotsset{
            cycle list/PRGn,
            cycle list/Dark2,
            cycle list/RdYlGn-6,
            cycle multiindex* list={
                mark list*\nextlist
                linestyles*\nextlist
                RdYlGn-6\nextlist
            },
        }
    
    \begin{axis}[
            title={Rectangular Occlusions}, 
            ylabel={SID}, 
            xlabel={Erased percentage (\%)}, 
            axis x line*=bottom,
            axis y line*=left,
            ymin=0,
            grid,
            xticklabel style={
                /pgf/number format/fixed,
                /pgf/number format/precision=2
            },
            scaled x ticks=false,
            legend style={at={\legendlocation},anchor=north west},
            every axis plot/.append style={
                ultra thick, mark size=1pt}
            ]
    
            \pgfplotstableread[col sep=comma]{csv/box_sid.csv}{\table}
            \pgfplotstablegetcolsof{\table}
            \pgfmathtruncatemacro\numberofcols{\pgfplotsretval-1}
    
            \foreach \n in {1,...,\numberofcols} {
                \pgfplotstablegetcolumnnamebyindex{\n}\of{\table}\to{\colname}
    
                \addplot
                table [
                    x index=0, 
                    y index=\n, 
                    col sep=comma] {\table};
    
                \addlegendentryexpanded{$d= $\colname};
            }
    
    \end{axis}
    \end{tikzpicture}
\end{tabular}

\caption{ SID behaves similarly even when we project features down to lower dimensions via PCA. We show four types of distortions that are applied to \imagenet{} images: added Gaussian noise, salt and pepper noise, Gaussian blur, and rectangular occlusions. For the Gaussian blur, we use a kernel size of 5. In each plot, SID is shown as a function of the parameter controlling the distortion, after reducing the embeddings to dimension $d \in \{512, 256, 128, 64, 32, 16\}$. All experiments are done using \inception{} as a feature extractor on the entire (50K) \imagenet{} validation set.
}
\label{fig:added-noise-sid}
\end{figure*}

\section{Using different models for feature extraction}
\label{app:models}
In this section we investigate how using different architectures to extract features from data affects SID. In \cref{fig:different feature extractors} we plot the behavior of SID on four noise distortion experiments using \inception{}, \resnetE{}, \resnetF{}, and \resnext{} as feature extractors. The noise distortions we use are additive Gaussian noise, salt and pepper noise, Gaussian blur, and rectangular occlusions, as described in \cref{sec:compare-sid-and-fid}. All the features come from \imagenet{} as in \cref{fig:added-noise-fid,fig:added-noise-sid}. Additionally, dimensionality reduction is applied across all models to dimensions $d\in\{512, 256, 128, 64, 32, 16\}$, although its worth noting that \resnetE{} features are already 512 dimensional. 
We see that the curve shapes are relatively similar across different settings, with some variations in the salt and pepper noise experiment.
Overall, the plots show that SID is relatively robust with respect to dimensionality reduction and choice of feature extraction model.

\begin{figure*}[hbt]
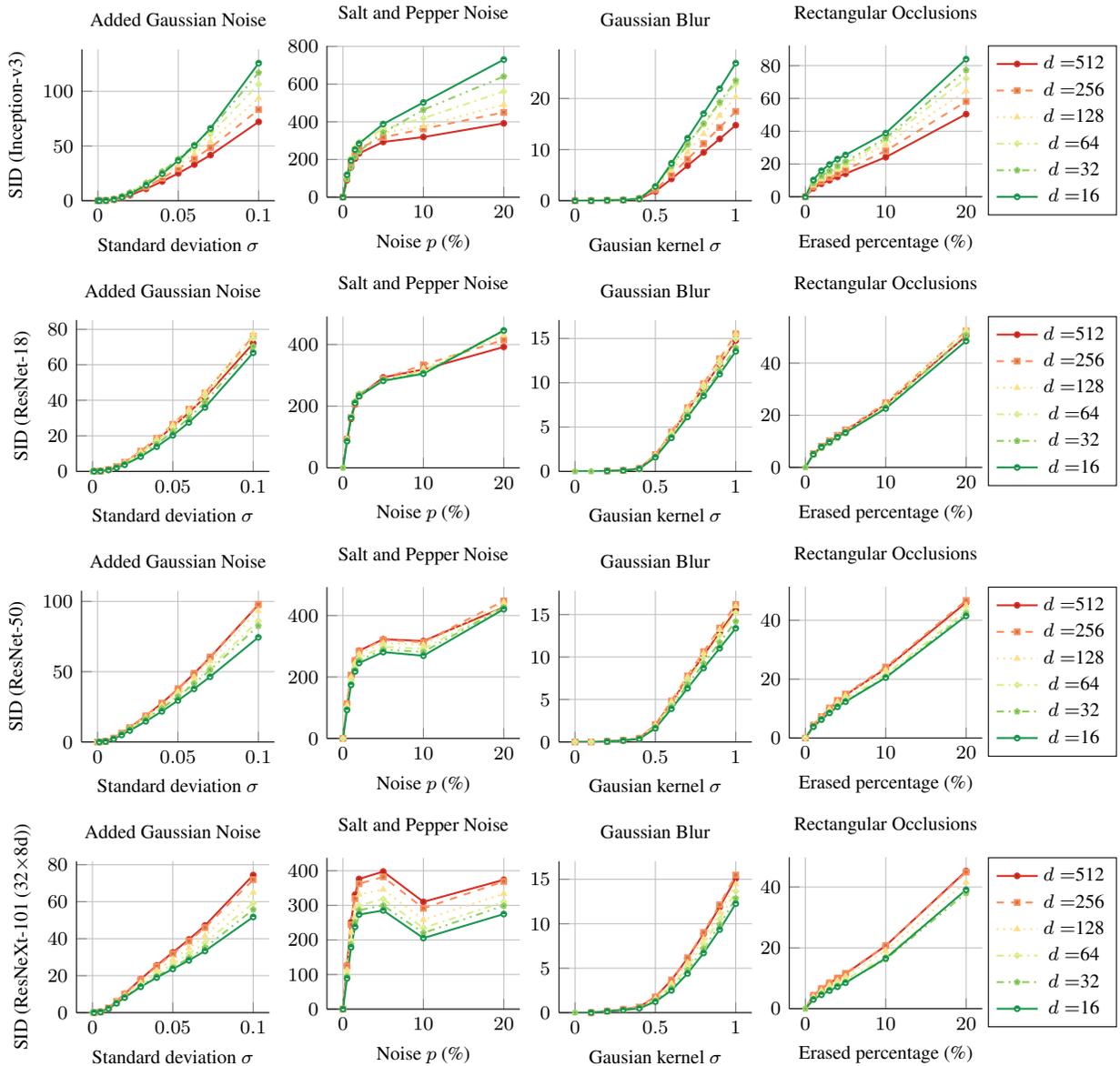

\def\figWidth{4.4cm}
\tikzstyle{every node}=[font=\footnotesize]
\def\linecolor{purple}
 
\setlength{\tabcolsep}{0pt}
% [inline block 0: 1 envs, 22193 chars -> data_tex | \begin{tabular}{cccc}     \begin{tikzpicture}...]


\caption{ SID behaves similarly on noise distortion experiments across different choices of feature extractors. We show four types of distortion applied to \imagenet{} images, using \inception{}, \resnetE{}, \resnetF{}, and \resnext{} as feature extractors. 
}
\label{fig:different feature extractors}
\end{figure*}

\section{Using SID on different datasets}
\label{app:datasets}

We also study how the behavior of SID on noise distortion experiments is affected by changing the dataset. In \cref{fig:different feature extractors} we plot SID for the same four noise distortions (additive Gaussian noise, salt and pepper noise, Gaussian blur, and rectangular occlusions) on  \imagenet{}, FFHQ, CelebA, and SVHN. 
All the features come from \inception{}. 
As in our other experiments, we apply dimensionality reduction to dimensions $d\in\{512, 256, 128, 64, 32, 16\}$ for all the datasets. We see that all settings yield curves which increase as the noise distortion is increased, as desired.

We additionally perform the above experiments on data from diffusion models. The results are shown in \cref{fig:diffusion}. We used pretrained diffusion models\footnote{The pretrained diffusion models are from the repository: \url{https://github.com/VSehwag/minimal-diffusion}} that generated data from the Stanford Cars dataset~\cite{cars}, CelebA~\cite{celeba}, AFHQ~\cite{afhq}, and the Flowers dataset~\cite{flowers}.

\begin{figure*}[hbt]
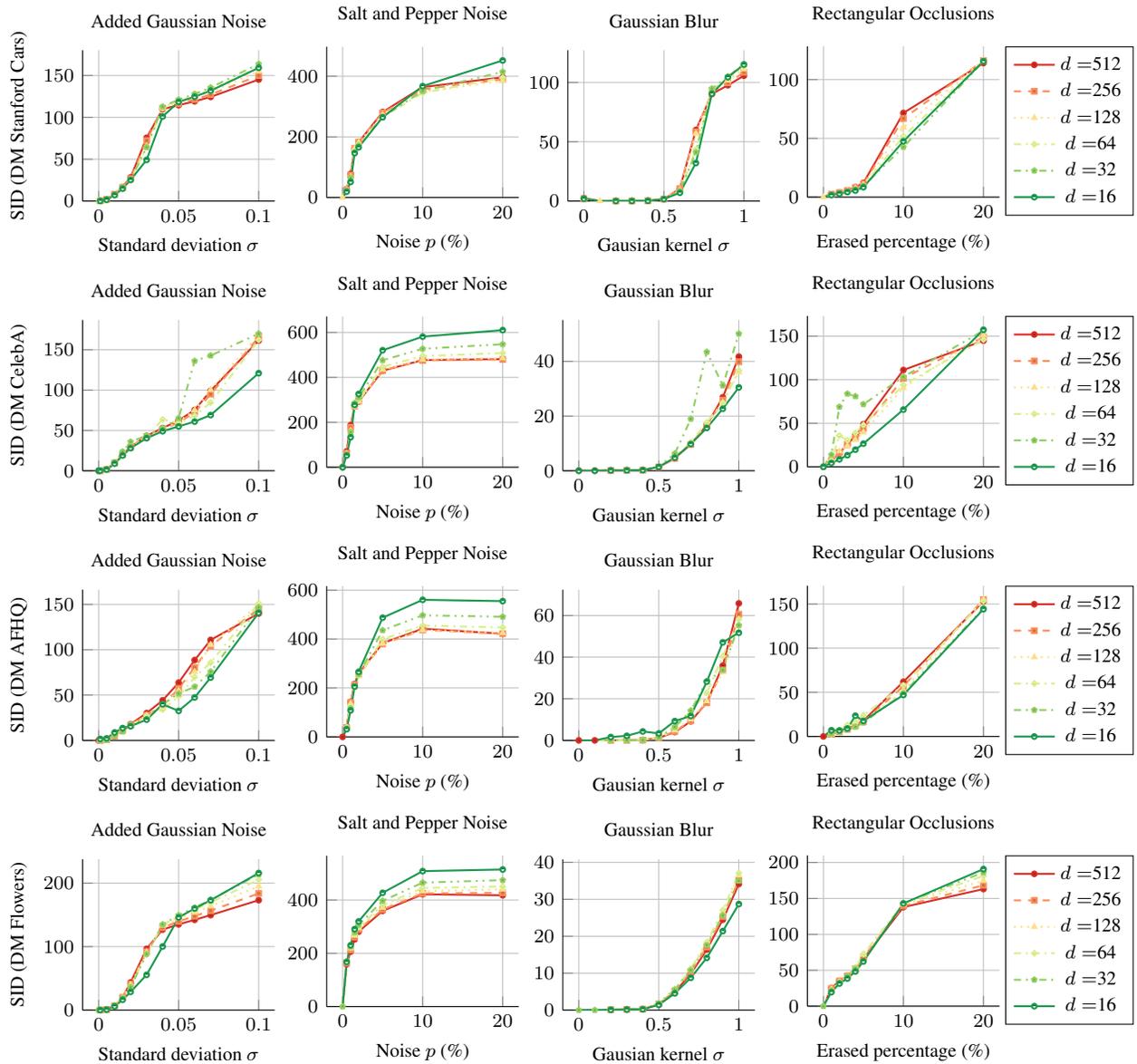

\def\figWidth{4.4cm}
\tikzstyle{every node}=[font=\footnotesize]
\def\linecolor{purple}
 
\setlength{\tabcolsep}{0pt}
% [inline block 1: 2 envs, 44898 chars -> data_tex | \begin{tabular}{cccc}     \begin{tikzpicture}...]


\caption{SID behaves as expected with different datasets; increasing as we increase the noise distortions. Here we use data from four pretrained diffusion models trained on the Stanford Cars dataset, CelebA, AFHQ, and the Flowers dataset. All features were obtained with the \inception{} feature extractor.
}
\label{fig:diffusion}
\end{figure*}

\end{document}